\documentclass{article}
\usepackage{style/unites}
\usepackage{XCharter}
\usepackage[scaled=1.1]{zlmtt} 

\usepackage[utf8]{inputenc}
\usepackage[T1]{fontenc}
\usepackage{microtype}

\usepackage{amsmath}
\usepackage{amssymb}
\usepackage{amsfonts}
\usepackage{amsthm}
\usepackage{mathtools}
\usepackage{mathrsfs}
\usepackage{physics}
\usepackage{braket}
\usepackage{slashed}
\usepackage{nicefrac}
\usepackage{textcomp}
\usepackage{dsfont}
\usepackage{bbm}
\usepackage{bm}

\usepackage{graphicx}
\usepackage{subcaption}
\usepackage[export]{adjustbox}
\usepackage{float}
\usepackage{booktabs}
\usepackage{dcolumn}
\newcolumntype{d}[1]{D{.}{.}{#1}}
\usepackage{bigstrut, tabularx, multirow, makecell, diagbox}
\usepackage{colortbl}
\usepackage{tabularray}
\UseTblrLibrary{booktabs}
\usepackage{threeparttable}
\usepackage{tablefootnote}
\usepackage{fontawesome5}

\usepackage{placeins}
\usepackage{caption}
\usepackage{footnote}
\usepackage{enumitem}
\usepackage{multicol}
\usepackage{xspace}
\usepackage{titletoc}
\usepackage{titlesec}
\usepackage[bottom]{footmisc}
\usepackage{setspace}

\usepackage{wrapfig}
\usepackage{tikz}
\usepackage{quantikz}
\usepackage{dashbox}
\usepackage{mdframed}
\usepackage{marvosym}
\usepackage{pifont}
\usepackage{CJK}
\usepackage{url}

\usepackage[table,x11names]{xcolor}
\usepackage[most]{tcolorbox}
\tcbuselibrary{breakable}
\usetikzlibrary{decorations.pathreplacing, fit}

\definecolor{primaryblue}{HTML}{0066CC}
\definecolor{accentcyan}{HTML}{00D4AA}
\definecolor{warmorange}{HTML}{FF6B35}
\definecolor{deepgray}{HTML}{2C3E50}
\definecolor{lightgray}{HTML}{F8F9FA}
\definecolor{gradientstart}{HTML}{667eea}
\definecolor{gradientend}{HTML}{764ba2}

\definecolor{citecolor}{HTML}{0071bc}
\definecolor{citeblue}{RGB}{0, 113, 188}
\definecolor{linkcolor}{HTML}{9A4D92}
\definecolor{firebrick}{rgb}{0.698,0.133,0.133}

\definecolor{paleviolet}{HTML}{E1EEFC}
\definecolor{CarolinaUltraLight}{HTML}{E7F4FC}
\definecolor{lightgrey}{RGB}{247, 247, 247}
\definecolor{shadecolor}{HTML}{EFEFEF}
\definecolor{lightyellow}{rgb}{1.0, 0.95, 0.7}
\definecolor{lightblue}{rgb}{0.90, 0.95, 1.0}
\definecolor{light-gray}{gray}{0.95}

\definecolor{darkgrey}{rgb}{0.5, 0.5, 0.5}
\definecolor{darkgreen}{rgb}{0, 0.5, 0}
\definecolor{mydarkblue}{rgb}{0,0.08,0.45}
\definecolor{mydarkblue2}{rgb}{0.133, 0.133, 0.698}
\definecolor{echodrk}{HTML}{0099cc}
\definecolor{mymauve}{rgb}{0.58,0,0.82}
\definecolor{midnightblue}{rgb}{0.1,0.1,0.44}
\definecolor{oxfordblue}{rgb}{0.0,0.13,0.28}
\definecolor{prussianblue}{rgb}{0.0,0.19,0.33}
\definecolor{coolteal}{rgb}{0, 0.45, 0.45}
\definecolor{olive}{rgb}{0.1, 0.3, 0}
\definecolor{mypurple}{rgb}{0.5,0,0.5}
\definecolor{almond}{rgb}{0.94, 0.87, 0.8}

\definecolor{blue_ampEncoding}{HTML}{DAE8FC}
\definecolor{green_encoder}{HTML}{D5E8D4}
\definecolor{purple_decoder}{HTML}{E1D5E7}
\definecolor{yellow_measure}{HTML}{FFF2CC}
\definecolor{gray_block}{HTML}{F5F5F5}
\definecolor{pink_dru}{HTML}{FAD9D5}
\definecolor{orange_v}{HTML}{FAD7AC}

\definecolor{colorA}{rgb}{1,0,0}
\definecolor{colorB}{rgb}{0,0.3,1}
\definecolor{colorC}{rgb}{0.9,0.8,0.2}
\definecolor{colorD}{rgb}{0,0.65,0}
\definecolor{lesslightgray}{rgb}{0.5,0.5,0.5}
\definecolor{fundamental}{RGB}{55, 110, 111}
\definecolor{Gred}{RGB}{219, 50, 54}
\definecolor{ToCgreen}{RGB}{0, 128, 0}
\definecolor{Sepia}{RGB}{112, 66, 20}
\definecolor{Dblue}{rgb}{0,0.08,0.45}
\definecolor{Blue}{rgb}{0, 0, 0.8}
\definecolor{blue}{rgb}{0,0,1}
\definecolor{UNCblue!10}{rgb}{0.84,0.91,0.98}
\definecolor{RowAlt}{rgb}{0.98,0.98,0.99}

\definecolor{CarolinaBlue}{HTML}{7BAFD4}        
\definecolor{CarolinaLightBlue}{HTML}{B3D4E5}   
\definecolor{CarolinaUltraLight}{HTML}{E8F4F8}  
\definecolor{CarolinaText}{HTML}{1C2B33}        

\usepackage[pagebackref=true,breaklinks=true,colorlinks,hyperfootnotes=false]{hyperref}
\hypersetup{
  colorlinks,
  citecolor=citeblue,
  linkcolor=firebrick,
  urlcolor=firebrick
}
\usepackage[nameinlink,capitalize,noabbrev]{cleveref}

\titlespacing\section{0pt}{4pt plus 4pt minus 2pt}{-2pt plus 2pt minus 2pt}
\titlespacing\subsection{0pt}{2pt plus 4pt minus 2pt}{-2pt plus 2pt minus 2pt}
\titlespacing\subsubsection{0pt}{2pt plus 4pt minus 2pt}{-2pt plus 2pt minus 2pt}

\makeatletter
\def\th@remark{%
  \thm@headfont{\bfseries}%
  \normalfont 
  \thm@preskip\topsep \divide\thm@preskip\tw@
  \thm@postskip\thm@preskip
}
\makeatother

\theoremstyle{definition}

\newtheorem{theorem}{Theorem}[section]
\tcolorboxenvironment{theorem}{
  breakable,
  colback=black!10,
  colframe=white,
  width=\linewidth, 
  enlarge left by=0pt,
  enlarge right by=0pt,
  boxsep=5pt,
  boxrule=0pt,
  left=0pt,right=0pt,top=0pt,bottom=0pt,
  arc=8pt,
  before skip=\topsep,
  after skip=\topsep
}

\tcolorboxenvironment{lemma}{
  breakable,
  colback=black!10,
  colframe=white,
  width=\linewidth,
  enlarge left by=0pt,
  enlarge right by=0pt,
  boxsep=5pt,
  boxrule=0pt,
  left=0pt,right=0pt,top=0pt,bottom=0pt,
  arc=8pt,
  before skip=\topsep,
  after skip=\topsep
}

\newtheorem{corollary}{Corollary}[theorem]
\tcolorboxenvironment{corollary}{
  breakable,
  colback=black!10,
  colframe=white,
  width=\linewidth,
  enlarge left by=0pt,
  enlarge right by=0pt,
  boxsep=5pt,
  boxrule=0pt,
  left=0pt,right=0pt,top=0pt,bottom=0pt,
  arc=8pt,
  before skip=\topsep,
  after skip=\topsep
}

\newtheorem{proposition}{Proposition}[section]
\tcolorboxenvironment{proposition}{
  breakable,
  colback=black!10,
  colframe=white,
  width=\linewidth,
  enlarge left by=0pt,
  enlarge right by=0pt,
  boxsep=5pt,
  boxrule=0pt,
  left=0pt,right=0pt,top=0pt,bottom=0pt,
  arc=8pt,
  before skip=\topsep,
  after skip=\topsep
}

\tcolorboxenvironment{definition}{
  breakable,
  colback=black!10,
  colframe=white,
  width=\linewidth,
  enlarge left by=0pt,
  enlarge right by=0pt,
  boxsep=5pt,
  boxrule=0pt,
  left=0pt,right=0pt,top=0pt,bottom=0pt,
  arc=8pt,
  before skip=\topsep,
  after skip=\topsep
}

\tcolorboxenvironment{assumption}{
  breakable,
  colback=black!10,
  colframe=white,
  width=\linewidth,
  enlarge left by=0pt,
  enlarge right by=0pt,
  boxsep=5pt,
  boxrule=0pt,
  left=0pt,right=0pt,top=0pt,bottom=0pt,
  arc=8pt,
  before skip=\topsep,
  after skip=\topsep
}

\tcolorboxenvironment{claim}{
  breakable,
  colback=black!10,
  colframe=white,
  width=\linewidth,
  enlarge left by=0pt,
  enlarge right by=0pt,
  boxsep=5pt,
  boxrule=0pt,
  left=0pt,right=0pt,top=0pt,bottom=0pt,
  arc=8pt,
  before skip=\topsep,
  after skip=\topsep
}

\tcolorboxenvironment{problem}{
  breakable,
  colback=black!10,
  colframe=white,
  width=\linewidth,
  enlarge left by=0pt,
  enlarge right by=0pt,
  boxsep=5pt,
  boxrule=0pt,
  left=0pt,right=0pt,top=0pt,bottom=0pt,
  arc=8pt,
  before skip=\topsep,
  after skip=\topsep
}

\tcolorboxenvironment{question}{
  breakable,
  colback=black!10,
  colframe=white,
  width=\linewidth,
  enlarge left by=0pt,
  enlarge right by=0pt,
  boxsep=5pt,
  boxrule=0pt,
  left=0pt,right=0pt,top=0pt,bottom=0pt,
  arc=8pt,
  before skip=\topsep,
  after skip=\topsep
}

\newtcolorbox{titleblock}{
  enhanced,
  frame hidden,
  colback=CarolinaUltraLight,
  colframe=CarolinaUltraLight,
  boxrule=0pt,
  arc=10pt,
  left=14pt,
  right=14pt,
  top=14pt,
  bottom=14pt,
  width=\linewidth,
  before skip=12pt plus 4pt,
  after skip=12pt plus 4pt,
  grow to left by=1.5pt,
  grow to right by=1.5pt,
  before upper={
    \setlength{\parindent}{0cm}
    \setlength{\parskip}{0.5cm}
  }
}

\crefname{theorem}{Theorem}{Theorems}
\crefname{proposition}{Proposition}{Propositions}
\crefname{lemma}{Lemma}{Lemmas}
\crefname{corollary}{Corollary}{Corollaries}
\crefname{definition}{Definition}{Definitions}
\crefname{assumption}{Assumption}{Assumptions}
\crefname{remark}{Remark}{Remarks}
\crefname{problem}{Problem}{Problems}
\crefname{property}{Property}{property}
\crefname{question}{Question}{Questions}

\numberwithin{equation}{section}
\numberwithin{theorem}{section}
\numberwithin{proposition}{section}
\numberwithin{definition}{section}
\numberwithin{lemma}{section}
\numberwithin{assumption}{section}
\numberwithin{remark}{section}

\newcommand{\best}[1]{\mathbf{#1}}

\newcommand\metadataformat[2][]{{\small {\bfseries #1:} #2}}

\def\1{\bm{1}}

\makeatletter
\let\save@mathaccent\mathaccent
\newcommand*\if@single[3]{%
    \setbox0\hbox{${\mathaccent"0362{#1}}^H$}%
    \setbox2\hbox{${\mathaccent"0362{\kern0pt#1}}^H$}%
    \ifdim\ht0=\ht2 #3\else #2\fi
}
\newcommand*\rel@kern[1]{\kern#1\dimexpr\macc@kerna}
\newcommand*\widebar[1]{\@ifnextchar^{{\wide@bar{#1}{0}}}{\wide@bar{#1}{1}}}
\newcommand*\wide@bar[2]{\if@single{#1}{\wide@bar@{#1}{#2}{1}}{\wide@bar@{#1}{#2}{2}}}
\newcommand*\wide@bar@[3]{%
    \begingroup
    \def\mathaccent##1##2{%
        \let\mathaccent\save@mathaccent
        \if#32 \let\macc@nucleus\first@char \fi
        \setbox\z@\hbox{$\macc@style{\macc@nucleus}_{}$}%
        \setbox\tw@\hbox{$\macc@style{\macc@nucleus}{}_{}$}%
        \dimen@\wd\tw@
        \advance\dimen@-\wd\z@
        \divide\dimen@ 3
        \@tempdima\wd\tw@
        \advance\@tempdima-\scriptspace
        \divide\@tempdima 10
        \advance\dimen@-\@tempdima
        \ifdim\dimen@>\z@ \dimen@0pt\fi
        \rel@kern{0.6}\kern-\dimen@
        \if#31
        \overline{\rel@kern{-0.6}\kern\dimen@\macc@nucleus\rel@kern{0.4}\kern\dimen@}%
        \advance\dimen@0.4\dimexpr\macc@kerna
        \let\final@kern#2%
        \ifdim\dimen@<\z@ \let\final@kern1\fi
        \if\final@kern1 \kern-\dimen@\fi
        \else
        \overline{\rel@kern{-0.6}\kern\dimen@#1}%
        \fi
    }%
    \macc@depth\@ne
    \let\math@bgroup\@empty \let\math@egroup\macc@set@skewchar
    \mathsurround\z@ \frozen@everymath{\mathgroup\macc@group\relax}%
    \macc@set@skewchar\relax
    \let\mathaccentV\macc@nested@a
    \if#31
    \macc@nested@a\relax111{#1}%
    \else
    \def\gobble@till@marker##1\endmarker{}%
    \futurelet\first@char\gobble@till@marker#1\endmarker
    \ifcat\noexpand\first@char A\else
    \def\first@char{}%
    \fi
    \macc@nested@a\relax111{\first@char}%
    \fi
    \endgroup
    }
\makeatother

\DeclareMathAlphabet{\mathsfit}{\encodingdefault}{\sfdefault}{m}{sl}
\SetMathAlphabet{\mathsfit}{bold}{\encodingdefault}{\sfdefault}{bx}{n}

\let\tilde\widetilde
\let\hat\widehat

\renewcommand{\arraystretch}{1.15}
\definecolor{RQBorder}{RGB}{0,114,178}   
\definecolor{RQFill}{RGB}{255,235,235}   

\newmdenv[
  linewidth=1.1pt,
  roundcorner=4pt,
  backgroundcolor=RQFill,
  linecolor=RQBorder,
  innertopmargin=6pt,
  innerbottommargin=6pt,
  innerleftmargin=8pt,
  innerrightmargin=8pt,
  skipabove=7pt,
  skipbelow=7pt
]{rqbox}

\definecolor{TrainColor}{RGB}{0,114,178}
\definecolor{EvalColor}{RGB}{213,94,0}
\definecolor{TransferColor}{RGB}{120,80,200}

\definecolor{ZeroShotColor}{RGB}{90,90,90}     
\definecolor{SFTFamColor}{RGB}{0,158,115}      
\definecolor{RLFamColor}{RGB}{204,121,167}     

\definecolor{ZeroShade}{RGB}{245,245,245}      
\definecolor{SFTShade}{RGB}{232,245,240}       
\definecolor{RLShade}{RGB}{245,232,242}        

\newcommand{\mZero}{\textcolor{ZeroShotColor}{\textbf{Base}}}
\newcommand{\mSFT}{\textcolor{SFTFamColor}{\textbf{Standard SFT}}}
\newcommand{\mSelf}{\textcolor{SFTFamColor}{\textbf{Self-SFT}}}
\newcommand{\mFinal}{\textcolor{SFTFamColor}{\textbf{Final-SFT}}}

\newcommand{\mReinforce}{\textcolor{RLFamColor}{\textbf{REINFORCE}}}
\newcommand{\mGRPO}{\textcolor{RLFamColor}{\textbf{GRPO}}}
\newcommand{\mOurs}{\textbf{TMS (Ours)}}

\renewcommand{\best}[1]{\colorbox{green!25}{\textbf{#1}}}
\newcommand{\secondbest}[1]{\colorbox{cyan!20}{#1}}

\newcommand{\modelhdr}[1]{\rowcolor{gray!20}\multicolumn{11}{c}{\textbf{#1}}\\}
\newcolumntype{L}[1]{>{\raggedright\arraybackslash}p{#1}}
\newcolumntype{C}[1]{>{\centering\arraybackslash}p{#1}}

\newcommand{\posdelta}[1]{\textcolor{green!50!black}{$(\uparrow#1)$}}
\newcommand{\negdelta}[1]{\textcolor{red!90}{$(\downarrow#1)$}}

\newcommand{\negd}[1]{\textcolor{red!90}{$\downarrow#1$}}

\usepackage{algorithm}
\usepackage{algorithmic}

\begin{document}

\makeatletter
\def\blfootnote{\gdef\@thefnmark{}\@footnotetext}
\makeatother

\makeatletter
\pagestyle{fancy}
\fancyhf{}
\renewcommand{\headrulewidth}{1pt}
\chead{\small\bf TMS: Trajectory-Mixed Supervision for Reward-Free, On-Policy SFT

}
\cfoot{\thepage}
\thispagestyle{fancy}
\makeatother

\makeatletter
\def\icmldate#1{\gdef\@icmldate{#1}}
\icmldate{\today}
\makeatother

\makeatletter
\fancypagestyle{fancytitlepage}{
  \fancyhead{}
  \lhead{\includegraphics[height=0.8cm]{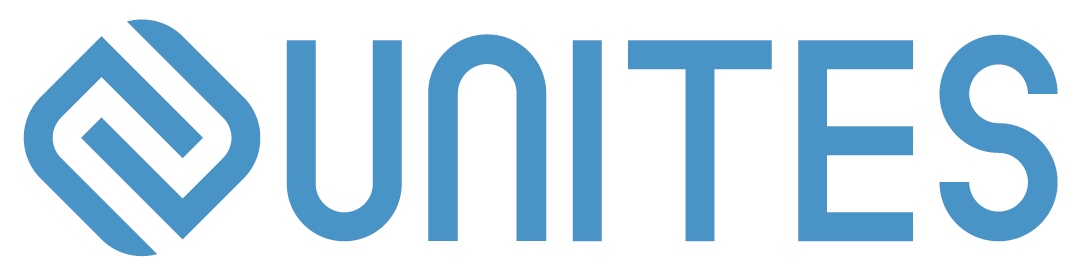}}
  \rhead{\it \@icmldate}
  \cfoot{}
}
\makeatother

\thispagestyle{fancytitlepage}

\vspace*{0.5em}

\noindent
\begin{titleblock}
    {\setlength{\parskip}{0cm}
     \raggedright
     {\setstretch{1.2}
      \LARGE\bfseries
      
      \par}
    }
    \vskip 0.2cm
    
    \begin{icmlauthorlist}
\mbox{Rana Muhammad Shahroz Khan$^{\,1\,\textrm{\Letter}}$},
\mbox{Zijie Liu$^{\,1}$},
\mbox{Zhen Tan$^{\,2\,}$},
\mbox{Charles Fleming$^{\,3\,}$},
and \mbox{Tianlong Chen$^{\,1\,}$}
\end{icmlauthorlist}

$^{1\,}$The University of North Carolina at Chapel Hill  \quad $^{2\,}$Arizona State University \quad $^{3\,}$Cisco

$^{\textrm{\Letter}}$ Corresponding Author

    \vskip 0.2cm
    
    Reinforcement Learning (RL) and Supervised Fine-Tuning (SFT) are the two dominant paradigms for enhancing Large Language Model (LLM) performance on downstream tasks. While RL generally preserves broader model capabilities (retention) better than SFT, it comes with significant costs: complex reward engineering, instability, and expensive on-policy sampling. In contrast, SFT is efficient but brittle, often suffering from catastrophic forgetting due to \textbf{Supervision Mismatch}: the divergence between the model's evolving policy and static training labels. We address this trade-off with \textbf{Trajectory-Mixed Supervision (TMS)}, a reward-free framework that approximates the on-policy benefits of RL by creating a dynamic curriculum from the model's own historical checkpoints. TMS minimizes \textit{Policy-Label Divergence (PLD)}, preventing the mode collapse that drives forgetting in standard SFT. Experiments across reasoning (MATH, GSM8K) and instruction-following benchmarks demonstrate that TMS effectively shifts the accuracy--retention Pareto frontier. While RL remains the gold standard for retention, TMS significantly outperforms standard and iterative SFT, bridging the gap to RL without requiring reward models or verifiers. Mechanistic analysis confirms that PLD drift accurately predicts forgetting and that TMS successfully mitigates this drift.

    \vskip 0.2cm
    {\setlength{\parskip}{0cm}
     \centering
     \makebox[\linewidth]{
        \metadataformat[Project Page]{
            \href{https://github.com/UNITES-Lab/tms.git}{https://github.com/UNITES-Lab/tms.git}
        }
     }
     }
\end{titleblock}

\blfootnote{%
$^{\textrm{\Letter}}$ Corresponding authors: \{shahroz, tianlong\}@cs.unc.edu
\\[2.5em]
\ifcsname @icmlpreprint\endcsname
  \textit{\csname @icmlpreprint\endcsname}%
\fi
}

\section{Introduction}
\label{sec:intro}

The post-training of large language models (LLMs) has largely converged to a simple recipe: Supervised Fine-Tuning (SFT) on curated demonstrations, followed by preference or RL-based alignment when higher reliability is required~\cite{ouyang2022traininglanguagemodelsfollow,christiano2023deepreinforcementlearninghuman}. SFT remains the default in practice due to its simplicity, stability, and efficiency in teaching task formats and high-quality behaviors. However, mounting evidence suggests that SFT often comes with a severe side effect: capability collapse or catastrophic forgetting~\cite{kotha2024understandingcatastrophicforgettinglanguage,zhang2025balancingspecialityversatilitycoarse}. While improving performance on the target downstream distribution, SFT-tuned models can regress in general reasoning, linguistic diversity, and even pre-existing safety guardrails~\cite{dong2023raftrewardrankedfinetuning}. As a result, these models become brittle under out-of-distribution (OOD) prompts, where small shifts in instruction style or reasoning demands can trigger disproportionate failures~\cite{sclar2024quantifyinglanguagemodelssensitivity,perez2021truefewshotlearninglanguage,wei2023chainofthoughtpromptingelicitsreasoning}.

\begin{figure}[t]
  \centering
  \includegraphics[width=0.9\linewidth]{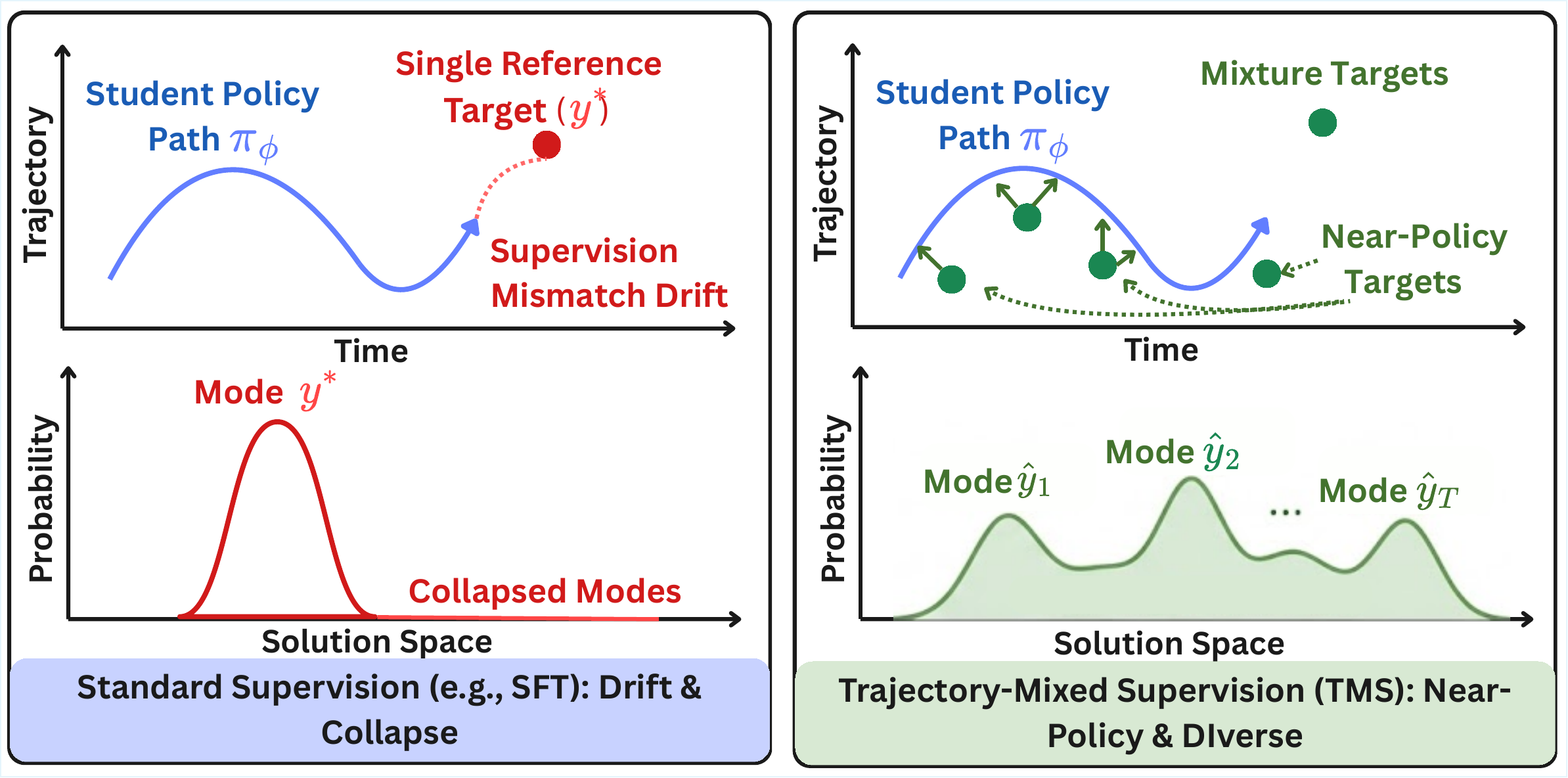}
  \caption{Single-reference supervision (e.g., SFT) induces supervision-mismatch drift and mode collapse; trajectory-mixed supervision (TMS) samples near-policy targets across training, preserving diverse solution modes.}
  \label{fig:teaser}
  \vspace{-2mm}
\end{figure}

In contrast, on-policy RL-style post-training, such as PPO~\cite{schulman2017proximalpolicyoptimizationalgorithms} and verifier-driven variants like GRPO~\cite{shao2024deepseekmathpushinglimitsmathematical}, is often observed to better preserve broad generalization while still improving target-task performance~\cite{ouyang2022traininglanguagemodelsfollow,bai2022constitutionalaiharmlessnessai,shao2024deepseekmathpushinglimitsmathematical}. A key distinction is \emph{where supervision comes from}: RL continually induces training targets from the model's current policy, keeping learning coupled to the model's evolving behavior, whereas SFT relies on static labels that do not adapt as the policy shifts~\cite{ross2011reductionimitationlearningstructured,schulman2017proximalpolicyoptimizationalgorithms}. Despite these benefits, RL is often harder to adopt at scale because it typically requires reward modeling or verifiers for credit assignment and incurs substantial on-policy sampling overhead~\cite{casper2023openproblemsfundamentallimitations}. This leaves practitioners with a persistent dilemma: SFT is easy and efficient but prone to over-specialization and forgetting, while RL is more robust but more complex. These observations raise a natural question:
\begin{mdframed}[userdefinedwidth=.99\linewidth,align=center,skipabove=3pt,skipbelow=3pt,innerleftmargin=6pt,innerbottommargin=6pt,innertopmargin=6pt,roundcorner=3pt,backgroundcolor=gray!10,linecolor=gray]
\noindent \textbf{Q.} \textit{Can we recover the retention benefits of on-policy optimization without reward models, verifiers, or full RL training loops?}
\end{mdframed}

\noindent Existing explanations (e.g., on-policy learning or KL regularization)~\cite{chen2025retainingdoingroleonpolicy, chu2025sft, jin2025rl, shenfeld2025rl} offer useful intuition, but they do not directly yield a simple, reward-free algorithm that can be dropped into standard SFT pipelines. We bridge this gap by re-framing what makes SFT fragile.

\noindent\textbf{Key observation: SFT optimizes a moving policy against fixed targets.} Although SFT is often framed as learning a fixed input--output mapping, the policy $\pi_{\theta_t}$ evolves throughout training while the supervision distribution $q(y|x)$ remains static. This induces \emph{temporal supervision mismatch}: as the model shifts probability mass toward alternative (often valid) trajectories, single-reference cross-entropy can produce large corrective gradients that over-enforce the reference and suppress diversity, especially on tasks with non-unique solutions (Figure~\ref{fig:teaser}). In contrast, on-policy RL continually re-couples optimization targets to the current policy, which helps preserve broad capabilities.

\noindent\textbf{Our approach: measure mismatch and fix it via trajectory supervision.} We quantify mismatch with \textit{Policy-Label Divergence (PLD)} and propose \textit{Trajectory-Mixed Supervision (TMS)}: a reward-free post-training method that samples supervision from historical checkpoints along a single training trajectory. By mixing near-policy targets across time, TMS reduces mismatch drift and preserves historically plausible solution modes, approximating the stabilizing effect of on-policy optimization without reward models or verifiers.

\noindent\textbf{In Summary}, we contribute the following:
\begin{mdframed}[userdefinedwidth=.99\linewidth,align=center,skipabove=3pt,skipbelow=3pt,innerleftmargin=6pt,innerbottommargin=6pt,innertopmargin=6pt,roundcorner=3pt,backgroundcolor=yellow!5,linecolor=gray]
\ding{182} \textbf{PLD framing.} We formalize \textit{Policy-Label Divergence (PLD)} to characterize supervision mismatch in SFT, where a moving policy is optimized against a fixed label distribution, and connect PLD drift to catastrophic forgetting.

\ding{183} \textbf{Reward-free, near-policy post-training.} We propose \textit{Trajectory-Mixed Supervision (TMS)}, which mixes supervision targets from historical checkpoints to reduce PLD and approximate on-policy learning without reward models, verifiers, or RL loops.

\ding{184} \textbf{Empirical and mechanistic validation.} Across model families and scales, TMS improves the accuracy--retention trade-off on reasoning and instruction-following benchmarks, and KL/PLD-based drift metrics reliably predict forgetting and explain why TMS shifts the Pareto frontier.
\end{mdframed}

\section{Related Work}

\noindent Extended related works are listed in Appendix~\ref{sec:appendix_related_work}.

\noindent \textbf{Post-Training Dynamics and Forgetting.}
Post-training like SFT and RL has become a standard paradigm for adapting foundation models to downstream tasks, often yielding substantial gains in instruction-following, reasoning, and task-specific utility~\cite{ouyang2022traininglanguagemodelsfollow,deepseekai2024deepseekv2strongeconomicalefficient,lu2025fine,kirk2024understandingeffectsrlhfllm,Luo_2022,xie2024me}. However, such post-training optimization can induce capability drift and catastrophic forgetting, especially under distribution shift or continual updates~\cite{luo2025empiricalstudycatastrophicforgetting,li2024revisitingcatastrophicforgettinglarge,kotha2024understandingcatastrophicforgettinglanguage}.
While prior work mitigates forgetting by constraining parameters or replaying data, these approaches typically assume a fixed supervision signal during post-training; in contrast, we study forgetting as a consequence of training a changing model under static supervision.

\noindent\textbf{Preference Optimization and On-Policy Alignment.}
Recent work on post-training alignment spans both on-policy reinforcement learning and offline preference optimization objectives~\cite{rafailov2023direct,bhatia2025valuedriftstracingvalue,schulman2017proximalpolicyoptimizationalgorithms,annadani2025preferenceguideddiffusionmultiobjectiveoffline,yan2025learning}.
A group of studies finds that RL-based post-training can exhibit substantially less forgetting and better generalization than supervised fine-tuning (SFT) under distribution shift~\cite{chen2025retainingdoingroleonpolicy,chu2025sft,lai2026reinforcementfinetuningnaturallymitigates,jin2025rlfinetuninghealsood}.
Our work shows that a key source of this retention advantage lies in keeping supervision aligned with the model's evolving policy, and demonstrates that this effect can be approximated in a reward-free manner.

\noindent\textbf{Trajectory Learning and Data Selection Curricula.}
Recent work increasingly leverages model-generated trajectories as training signals, enabling self-distillation and iterative refinement without additional labels~\cite{wang2023selfconsistencyimproveschainthought,zelikman2022starbootstrappingreasoningreasoning,Guo_2025,yuan2025selfrewardinglanguagemodels,bai2022constitutional}.
Empirical studies suggest that training on distributions induced by the evolving policy can substantially reduce forgetting relative to static supervision~\cite{chu2025sft,lai2026reinforcementfinetuningnaturallymitigates,jin2025rl}.
Unlike prior trajectory-based methods focused on performance or curricula, our work uses trajectories to align supervision with an evolving policy and reduce forgetting without rewards.

\section{Preliminaries}
\label{sec:setup}
\subsection{Post-Training Formulation}
We consider post-training of a language model parameterized by $\theta$, initialized from a pre-trained base policy $\pi_{\theta_0}$. The model maps inputs $x \in \mathcal{X}$ to outputs $y \in \mathcal{Y}$.

\noindent \textbf{Supervised Fine-Tuning (SFT).} SFT assumes a static dataset $\mathcal{D}=\{(x_i,y_i^*)\}_{i=1}^N$ drawn from a reference distribution $q(y|x)$ (e.g., human experts, verifiers, or a teacher model). The objective minimizes the negative log-likelihood of the reference labels under the current policy $\pi_\theta$ (with $\theta$ updated over steps $t$):
\begin{equation}
\mathcal{L}_{\text{SFT}}(\theta)= -\mathbb{E}_{(x,y^*)\sim \hat p_{\mathcal{D}}}\left[\log \pi_{\theta}(y^*|x)\right].
\end{equation}
Crucially, in SFT the target distribution $q$ remains fixed throughout training, while the policy $\pi_{\theta}$ evolves, inducing a temporal mismatch analyzed in subsequent sections.

\noindent \textbf{Reinforcement Learning (RL).}
RL fine-tuning optimizes a reward signal $r(x,y)$ over model outputs $y$ conditioned on prompts $x$. A common formulation maximizes expected reward while regularizing deviation from the base model via a KL penalty:
\begin{align*}
\max_\theta\; \mathbb{E}_{x\sim \mathcal{D}_{\text{prompt}}}&\big[J(\theta;x)\big], \\
J(\theta;x)=\mathbb{E}_{y\sim\pi_\theta(\cdot|x)}[r(x,y)]&
-\beta\,D_{\mathrm{KL}}\!\big(\pi_\theta(\cdot|x)\,\|\,\pi_{\theta_0}(\cdot|x)\big).
\end{align*}

Here, $\mathcal{D}_{\text{prompt}}$ denotes the prompt distribution (which may overlap with inputs in $\mathcal{D}$). RL is \textit{on-policy} with respect to $(y|x)$: training trajectories (responses) are sampled from the evolving policy itself, so the response distribution shifts with model capabilities.

\subsection{Evaluation Framework: Target vs. Retention}
We distinguish improvement on the trained task from preservation of existing capabilities.

\noindent \textbf{\ding{182} Target Performance ($S_{\text{tgt}}$):} performance on downstream tasks $\mathcal{T}_{\text{tgt}}$ used for post-training.

\noindent \textbf{\ding{183}Retention Performance ($S_{\text{ret}}$):} performance on a held-out suite $\mathcal{T}_{\text{ret}}$ not seen during post-training, measuring retained capabilities and robustness to distribution shift.

\subsection{The Forgetting Metric}
Let $S_b(\pi)$ be the score of policy $\pi$ on benchmark $b \in \mathcal{T}_{\text{ret}}$, and $\Delta_b = S_b(\pi_{\theta_t}) - S_b(\pi_{\theta_0})$. We define the Forgetting Score as average degradation across the retention suite, considering only negative changes:
\begin{equation}
\mathcal{F}(\pi_{\theta_t}) = \frac{1}{|\mathcal{T}_{\text{ret}}|}\sum_{b\in\mathcal{T}_{\text{ret}}}\min(\Delta_b,0).
\end{equation}
In this formulation, $\mathcal{F}\le 0$; more negative values indicate worse catastrophic forgetting.

\section{What Exactly Fails in SFT?}
\label{sec:dissection}
Prior work~\cite{chen2025retainingdoingroleonpolicy, chu2025sft, jin2025rl, shenfeld2025rl} often attributes the superior retention of RL fine-tuning to broad factors such as KL regularization or the use of preference/negative samples. We argue that forgetting under SFT is not monolithic. Instead, it arises from two largely independent failure modes that emerge when optimizing an evolving policy against a static supervision distribution: \textit{(A) temporal supervision mismatch} and \textit{(B) mode collapse under single-reference cross-entropy}.

\begin{table}[t]
\centering
\caption{\textbf{Failure Mode A: Mismatch drift during SFT.}}
\label{tab:mismatch_drift}
\vspace{-1mm}
\footnotesize
\setlength{\tabcolsep}{2pt}
\renewcommand{\arraystretch}{1.08}
\resizebox{0.50\linewidth}{!}{%
\begin{tabular}{lcccc}
\toprule
\textbf{Step} &
\textbf{Train NLL$\downarrow$} &
\textbf{Val NLL$\downarrow$} &
\textbf{MATH Acc$\uparrow$} &
\textbf{ARC-C Acc$\uparrow$} \\
\midrule
0 (Base)               & 1.85 & 1.88 & 48.1\% & 66.9\% \\
500                    & 0.85 & 0.92 & 52.5\% & 59.1\% \\
1000                   & 0.42 & 0.44 & 56.5\% & 52.3\% \\
\textbf{2000}          & \textbf{0.15} & \textbf{0.67} & \textbf{61.2\%} & \textbf{49.5\%} \\
5000                   & 0.04 & 0.98 & 62.4\% & 45.2\% \\
\bottomrule
\end{tabular}%
}
\vspace{-6mm}
\end{table}

\subsection{Failure Mode A: Temporal Supervision Mismatch}
The core structural property of SFT is that the supervision distribution $q(y|x)$ is fixed, while the policy $\pi_{\theta_t}$ evolves over optimization steps.

\noindent \textbf{\ding{182} Mechanism.}
As training progresses, $\pi_{\theta_t}$ often places mass on responses that are semantically correct but differ from the single reference $y^*$ in surface form (e.g., phrasing, reasoning style, intermediate steps). When $\pi_{\theta_t}(y^*|x)$ becomes small, token-level cross-entropy induces large gradients that pull the policy back toward $y^*$, producing non-local updates that can interfere with features supporting unrelated capabilities. Intuitively, the optimizer is forced to spend capacity matching a rigid template rather than preserving general representations.

\noindent \textbf{\ding{183} Evidence and observable signatures.}
We observe a characteristic \emph{knee point}: training NLL decreases monotonically, while a held-out estimate of the same objective on $\mathcal{D}_{\text{val}}$ (from the same task distribution) improves early and then degrades at later checkpoints.
Table~\ref{tab:mismatch_drift} illustrates the pattern on \textsc{Qwen2.5-1.5B}~\cite{Yang2024Qwen25TR} fine-tuned with SFT on \textsc{MATH}~\cite{hendrycks2021measuring}. While training NLL continues to fall, held-out NLL reaches a minimum at step 1000 and then increases. In the same interval, retention on \textsc{ARC-C}~\cite{allenai:arc} collapses, even though target \textsc{MATH} accuracy continues to improve slightly. This decoupling indicates that SFT can ``converge'' on the training objective while drifting away from generalizable behavior.

\begin{mdframed}[userdefinedwidth=.99\linewidth,align=center,skipabove=3pt,skipbelow=3pt,innerleftmargin=6pt,innerbottommargin=6pt,innertopmargin=6pt,roundcorner=3pt,backgroundcolor=cyan!5,linecolor=gray]
\noindent \textbf{Hypothesis A (H-A).} \textit{A major driver of SFT forgetting is supervision mismatch drift: as the policy evolves away from the fixed supervision distribution, held-out PLD (proxied by validation NLL) increases and retention collapses even while training loss improves.}
\end{mdframed}

\subsection{Failure Mode B: Mode Collapse Under Single-Reference CE Targets}
Standard SFT applies token-level cross-entropy to a single reference trajectory $y^*$. For many reasoning and code tasks, the solution set is non-unique: there exist many valid outputs $y'$ that reach the correct answer.

\begin{table}[t]
\centering
\caption{\textbf{Failure Mode B: Mode collapse under SFT.}}
\label{tab:mode_collapse}
\footnotesize
\setlength{\tabcolsep}{2pt}
\renewcommand{\arraystretch}{1.08}
\resizebox{0.50\linewidth}{!}{%
\begin{tabular}{lcccc}
\toprule
\textbf{Method} &
\textbf{Pass@1$\uparrow$} &
\textbf{Pass@100$\uparrow$} &
\textbf{SC-Acc$\uparrow$} &
\textbf{AnsEnt$\uparrow$} \\
\midrule
Base Model    & 48.1\% & 62.0\% & 56.0\% & \textbf{2.45} \\
Standard SFT  & 62.4\% & 66.0\% & 68.0\% & 0.85 \\
RL (GRPO)     & \textbf{63.5\%} & 73.9\% & \textbf{75.2\%} & 1.35 \\
TMS (Ours)    & 62.8\% & \textbf{74.0\%} & 70.9\% & 1.28 \\
\bottomrule
\end{tabular}%
}
\vspace{-4mm}
\end{table}

\noindent \textbf{\ding{182} Mechanism.}
By concentrating likelihood mass on $y^*$, SFT implicitly suppresses alternative valid trajectories $y'$, encouraging collapse onto a narrow mode. This is particularly harmful when robustness requires maintaining support over multiple valid reasoning paths (\emph{e.g.}, paraphrases).

\noindent \textbf{\ding{183} Evidence and diagnostics.}
We quantify mode collapse using coverage and diversity proxies computed from $K$ sampled rollouts per prompt under fixed decoding across methods:
(i) \emph{Pass@K} (coverage);
(ii) \emph{Self-consistency accuracy} (SC-Acc; majority-vote accuracy over $K$ samples); and
(iii) \emph{Answer entropy} (AnsEnt; entropy of the empirical answer distribution; higher implies broader support).
On \textsc{MATH}, SFT improves Pass@1 but sharply reduces answer entropy, consistent with collapsing to a narrow solution mode (Table~\ref{tab:mode_collapse}). In contrast, on-policy RL (GRPO) and our TMS method preserve substantially higher entropy while maintaining strong coverage (Pass@100). Throughout, we compute Pass@K/SC-Acc/AnsEnt with the same sampling protocol for all methods, so the observed differences reflect changes in model support rather than evaluation artifacts.

\begin{mdframed}[userdefinedwidth=.99\linewidth,align=center,skipabove=3pt,skipbelow=3pt,innerleftmargin=6pt,innerbottommargin=6pt,innertopmargin=6pt,roundcorner=3pt,backgroundcolor=cyan!5,linecolor=gray]
\noindent \textbf{Hypothesis B (H-B).}
\textit{On tasks with non-unique solutions, single-reference SFT induces mode collapse, which harms robustness and contributes to downstream degradation. Methods that preserve multi-modal support improve coverage and retention.}
\end{mdframed}

\subsection{Policy-Label Divergence (PLD)}
To formalize and detect these failures, we introduce \textbf{Policy-Label Divergence (PLD)}. While prior analyses often focus on drift from the base model ($\pi_{\theta_0}$), PLD measures the tension between the model's current \textit{beliefs} ($\pi_{\theta_t}$) and the \textit{supervision} distribution ($q$) it is trained to match.

\noindent \textbf{Formal definition (forward form aligned with SFT).}
We define PLD as the expected forward KL (equivalently cross-entropy up to a constant) between the supervision distribution and the policy:
\begin{equation}
\mathrm{PLD}(\theta_t; q) =
\mathbb{E}_{x \sim \mathcal{D}_x}
\left[
D_{\mathrm{KL}}\!\left(q(\cdot|x)\,\|\,\pi_{\theta_t}(\cdot|x)\right)
\right].
\end{equation}
In the ideal case, $q(y|x)$ would represent the full ground-truth posterior. In standard single-reference SFT, $q$ is typically a Dirac delta $\delta_{y^*}$ concentrated on a single reference solution $y^*$. In this setting, minimizing PLD is equivalent to minimizing the usual token-level cross-entropy objective.

\noindent \textbf{The generalization gap (PLD drift).}
The key distinction arises when we estimate PLD on \textit{held-out} inputs drawn from the same task distribution rather than the training samples themselves. Let $\mathcal{D}_{\text{train}}$ be the training set and $\mathcal{D}_{\text{val}}$ a held-out set from the same distribution:
\begin{itemize}[topsep=0pt, leftmargin=*, itemsep=4pt]
    \item[\ding{114}] \textbf{Memorization phase:} On $\mathcal{D}_{\text{train}}$, PLD decreases monotonically as the model fits the reference trajectories.
    \item[\ding{114}] \textbf{Mismatch drift phase:} On $\mathcal{D}_{\text{val}}$, PLD may reach a minimum early in training and then increase. We interpret this rising tail as \textbf{supervision mismatch drift}: the policy is being pulled toward increasingly brittle reference-matching rather than maintaining generalizable representations.
\end{itemize}

\noindent
For single-reference supervision, we approximate held-out PLD using validation negative log-likelihood (NLL):
\begin{equation}
\widehat{\mathrm{PLD}}_{\text{val}}(\theta_t)
= -\frac{1}{|\mathcal{D}_{\text{val}}|}
\sum_{(x, y^*) \in \mathcal{D}_{\text{val}}}
\log \pi_{\theta_t}(y^* | x).
\end{equation}
A model can maintain high target accuracy (e.g., correct final answers) while $\widehat{\mathrm{PLD}}_{\text{val}}$ increases, indicating growing disagreement with the supervision template.

\section{Method: Trajectory-Mixed Supervision}
\label{sec:method}

We introduce \textbf{Trajectory-Mixed Supervision (TMS)}, a reward-free post-training procedure that addresses both failure modes in Section~\ref{sec:dissection}. TMS replaces static single-reference supervision with a \emph{trajectory mixture} of the model's own intermediate policies, turning the optimization path into a curriculum. Intuitively, intermediate checkpoints provide (i) \emph{near-policy} targets that reduce supervision mismatch drift, and (ii) broader support over valid solution modes that mitigate mode collapse. We formalize this perspective and relate TMS to mismatch and mode preservation in Appendix~\ref{sec:unified_view}.

\subsection{Core Algorithm}
\noindent \textbf{Stage 1: Trajectory harvesting.}
Given a training set $\mathcal{D}=\{(x_i,y_i^*)\}_{i=1}^N$, we run a standard post-training procedure for a fixed budget and record $T$ intermediate checkpoints $\{\theta_1,\dots,\theta_T\}$ at uniformly spaced steps.\footnote{In all experiments, we use $T=10$ unless otherwise stated.}
For each checkpoint $t$ and input $x$, we generate a response $\hat{y}^{(t)}(x)\sim \pi_{\theta_t}(\cdot|x)$ and store the mapping $(x,t)\mapsto \hat{y}^{(t)}(x)$. We denote the resulting trajectory buffer by
\[
\mathcal{H}=\{\hat{y}^{(t)}(x)\;:\; x\in \mathcal{D}_x,\; t\in\{1,\dots,T\}\},
\]
where $\mathcal{D}_x$ denotes the empirical distribution over inputs $\mathcal{D}$.

\noindent \textbf{Stage 2: Trajectory-mixed supervision distribution.}
For each input $x$, TMS defines an empirical mixture supervision distribution supported on the trajectory outputs:
\begin{equation}
m(\cdot|x)=\frac{1}{T}\sum_{t=1}^T \delta_{\hat{y}^{(t)}(x)}(\cdot).
\label{eq:traj_mixture}
\end{equation}
Optionally, we interpolate with the original reference distribution $q(\cdot|x)$ (e.g., $\delta_{y^*}$ for single-reference SFT) using a mixing weight $\alpha\in[0,1]$:
\begin{equation}
q_\alpha(\cdot|x)=\alpha\, q(\cdot|x) + (1-\alpha)\, m(\cdot|x).
\label{eq:alpha_mix}
\end{equation}
In the main setting, we use the trajectory mixture ($\alpha=0.25$) unless stated otherwise.

\noindent \textbf{Stage 3: Student training.}
We train a student policy $\pi_\phi$ from the same initialization as the baseline (e.g., $\phi_0=\theta_0$) by minimizing the forward KL (equivalently cross-entropy) to the mixture supervision:
\begin{equation}
\min_\phi\;\; \mathbb{E}_{x\sim \mathcal{D}_x}\Big[ D_{\mathrm{KL}}\big(q_\alpha(\cdot|x)\,\|\,\pi_\phi(\cdot|x)\big)\Big].
\label{eq:tms_objective}
\end{equation}
Operationally, this corresponds to sampling $t\sim\mathrm{Uniform}\{1,\dots,T\}$ (or sampling from $q$ with probability $\alpha$) and applying standard token-level NLL on it. The full algorithm can be found in Appendix~\ref{sec:algorithm}.

\section{Theoretical Analysis}
\label{sec:theory}

This section provides a formal lens for the empirical failure modes in Section \ref{sec:dissection}. We first clarify why single-reference SFT can be structurally misaligned with robustness on non-unique tasks. We then show that changes in expected task performance are controlled by distributional drift from the base model, yielding a principled justification for KL-to-base as a predictor of forgetting.

\subsection{Motivation: Why static single-reference supervision can be misaligned}
SFT optimizes a forward-KL (cross-entropy) projection onto a fixed supervision distribution $q(y|x)$.
When $q$ is \emph{sparse}, e.g., a Dirac delta $\delta_{y^*}$ representing a single reference trajectory, the projection can encourage concentration of probability mass on that trajectory, suppressing alternative valid solutions. This aligns with Failure Mode~B (Section~\ref{sec:dissection}) and motivates relaxing $q$ into a broader supervision distribution that preserves multi-modal support.

\begin{proposition}[\textbf{Single-reference projection encourages concentration}]
\label{prop:concentration}
For a fixed input $x$, minimizing the cross-entropy $\mathbb{E}_{y\sim q(\cdot|x)}[-\log \pi_\theta(y|x)]$ with $q(\cdot|x)=\delta_{y^*}$ is equivalent to maximizing $\log \pi_\theta(y^*|x)$. This objective provides no positive training signal for alternative valid outputs $y'\neq y^*$, and therefore can reduce support over the set of valid solutions when capacity is limited or optimization is continued beyond the point of generalization.
\end{proposition}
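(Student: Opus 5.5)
The plan has three parts: first the equivalence, then the absence of a positive signal for $y'$, and finally the loss of support, which is the real content.

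\emph{Equivalence.} We compute the expectation against a Dirac mass: $\mathbb{E}_{y\sim\delta_{y^*}}[-\log\pi_\theta(y|x)] = -\log\pi_\theta(y^*|x)$. Minimizing this is the same as maximizing $\log\pi_\theta(y^*|x)$. Equivalently, since $H(\delta_{y^*})=0$, it is the same as minimizing $D_{\mathrm{KL}}(\delta_{y^*}\,\|\,\pi_\theta(\cdot|x))$, which links the statement to the PLD definition.

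\emph{No positive signal for alternatives.} We study the functional gradient, first at the level of a softmax over complete sequences and then, by the chain rule, at the token level. Write $\pi_\theta(y|x)=\exp(z_y)/\sum_{y''}\exp(z_{y''})$ with logits $z$. Then
\[
\frac{\partial}{\partial z_{y}}\bigl(-\log\pi_\theta(y^*|x)\bigr)=\pi_\theta(y|x)-\mathbb{1}[y=y^*].
\]
A gradient-descent step therefore changes $z_{y'}$ for $y'\neq y^*$ by $-\eta\,\pi_\theta(y'|x)\le 0$. Every alternative output is pushed down, in proportion to its current mass, and only $y^*$ is pushed up.

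Since probabilities sum to one, the optimum of the objective over the full simplex is $\pi(\cdot|x)=\delta_{y^*}$. The objective is strictly decreasing toward that point, so any valid set $V\ni y^*$ has $\pi(V\setminus\{y^*\}|x)\to 0$ along an exact minimizing sequence. This gives the ``no positive training signal'' claim in its strongest, unconstrained form.

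\emph{Reduced support under finite capacity or continued training.} This step is the main obstacle, because the statement is deliberately hedged with ``can reduce''. The honest formal content is an existence claim, and a parameterized model may share parameters between $y^*$ and $y'$, so some $y'$ could rise as a side effect.

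I would handle this in two ways. First, I would argue in the tabular (or sufficiently expressive) regime using the logit gradient above: the total mass on $V\setminus\{y^*\}$ is nonincreasing along gradient flow, since
\[
\frac{d}{dt}\pi(y'|x)=\pi(y')\Bigl(\dot z_{y'}-\sum_{y''}\pi(y'')\dot z_{y''}\Bigr).
\]
Here the bracket equals $-\pi(y')+\sum_{y''}\pi(y'')^2-\pi(y^*)$, which is negative whenever $y'\neq y^*$. Second, for the ``can'' direction under shared parameters, a two-output toy example with a shared logit suffices to show that alternatives need not be lifted. Together these establish the qualitative proposition without claiming a universal statement about deep networks.
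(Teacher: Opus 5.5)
Your equivalence step and your observation that $\delta_{y^*}$ is the only infimizer over the simplex are correct. They are essentially all the paper offers: it gives no separate proof, and treats the second sentence as an informal consequence of the loss depending on $\pi_\theta(y^*|x)$ alone.

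The gap is in the quantitative step you add for the support claim. A logit being pushed down does not mean its probability goes down, because the normalizer moves too. Your bracket
\[
-\pi(y')-\pi(y^*)+\sum_{y''}\pi(y'')^2
\]
is not negative in general. Take three outputs, $y^*$, a valid $y'$ and an invalid $y_3$, with $\pi(\cdot|x)=(0.01,\,0.01,\,0.98)$. The bracket equals $-0.02+0.9606>0$, so both $\pi(y'|x)$ and $\pi(V\setminus\{y^*\}|x)$ for $V=\{y^*,y'\}$ increase. The descent step lowers each logit in proportion to its own mass, so a dominant wrong output loses fastest and a small valid alternative gains relative mass. Your claim that the mass on $V\setminus\{y^*\}$ is nonincreasing along the flow is therefore false. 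It holds unconditionally only for $V=\mathcal{Y}$, because $\pi(y^*|x)$ itself is nondecreasing.

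What is true is conditional. Since $\sum_{y''}\pi(y'')^2\le\max_{y''}\pi(y'')$, the bracket is strictly negative for every $y'\neq y^*$ as soon as $y^*$ is a mode of $\pi(\cdot|x)$. Along the flow, each gap $z_{y^*}-z_{y''}$ grows at rate $1-\pi(y^*)+\pi(y'')$, which exceeds $\tfrac12$ while $y^*$ is not a mode. So for a finite output set, $y^*$ becomes a mode in finite time and stays one, and from then on every alternative decays monotonically to $0$. State the result in that form; it is exactly the ``optimization continued beyond the point of generalization'' regime the proposition hedges on.

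Two smaller points. First, the transfer ``by the chain rule'' to the token level is asserted but never carried out, and your sign statement does not survive it. Take $\pi_\theta(y|x)=\prod_t\pi_\theta(y_t|x,y_{<t})$, even with a separate softmax for each prefix. Descent on $-\log\pi_\theta(y^*|x)$ raises $\pi_\theta(y^*_t|x,y^*_{<t})$ at every position. It therefore increases the factors of $\pi_\theta(y'|x)$ coming from any prefix $y'$ shares with $y^*$. Only the first divergent token is pushed down, and, by the same computation as above, only once $y^*$'s token is a mode of that conditional. Conditionals after the divergence receive no gradient at all. So ``only $y^*$ is pushed up'' is a property of the sequence-level tabular softmax. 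The right reading of ``no positive training signal'' is the paper's: nothing in the objective rewards $y'$, and concentration follows asymptotically from $\pi_\theta(y^*|x)\to1$, not step by step.

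Second, the ``can'' clause is existential, so the tabular model already witnesses it. Your two-output toy is trivial, since there $\pi(y'|x)=1-\pi(y^*|x)$ must fall whenever the loss does, whatever the parameter sharing.
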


\noindent
Proposition~\ref{prop:concentration} is not a claim that SFT must always collapse modes, but that its \emph{training signal} is inherently one-sided under single-reference supervision. TMS replaces the fixed $q$ by a trajectory mixture whose support includes multiple historically plausible outputs, directly addressing this.

\subsection{Bounding behavior change via distribution drift}
We now show that distributional drift from the base model upper-bounds changes in expected performance on held-out tasks. This formalizes why KL-to-base is a meaningful predictor of forgetting (and complements PLD, which diagnoses mismatch with supervision).

\begin{itemize}[leftmargin=*,itemsep=2pt]
    \item [\ding{114}] \textbf{Assumption 1 (Bounded score).} For a given held-out task, let $f_x:\mathcal{Y}\rightarrow[0,1]$ be a bounded scoring function for prompt $x$ (e.g., indicator of correctness, normalized reward).
    \item [\ding{114}] \textbf{Assumption 2 (Prompt distribution).} Prompts are drawn from a fixed evaluation distribution $x\sim \mathcal{D}_{\text{eval}}$ (e.g., the held-out suite prompts).
\end{itemize}

\begin{theorem}[\textbf{Forgetting is controlled by KL drift}]
\label{thm:forgetting_bound}
Let $\pi_0(\cdot|x)$ be the base policy and $\pi_\theta(\cdot|x)$ a fine-tuned policy. Then the change in expected score on $\mathcal{D}_{\text{eval}}$ is bounded by the KL divergence from the base:
\begin{align*}
\Big|
\mathbb{E}_{x\sim \mathcal{D}_{\text{eval}}}&\big[\mathbb{E}_{y\sim \pi_\theta(\cdot|x)} f_x(y)\big]
-
\mathbb{E}_{x\sim \mathcal{D}_{\text{eval}}}\big[\mathbb{E}_{y\sim \pi_0(\cdot|x)} f_x(y)\big]
\Big| \\
\;\le\;&
\mathbb{E}_{x\sim \mathcal{D}_{\text{eval}}}\Big[\sqrt{2\,D_{\mathrm{KL}}(\pi_\theta(\cdot|x)\,\|\,\pi_0(\cdot|x))}\Big].
\end{align*}
\end{theorem}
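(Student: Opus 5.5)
The argument is a per-prompt total-variation estimate followed by Pinsker's inequality, averaged over prompts.

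First, fix a prompt $x$ and write $\Delta(x)=\mathbb{E}_{y\sim\pi_\theta(\cdot|x)}f_x(y)-\mathbb{E}_{y\sim\pi_0(\cdot|x)}f_x(y)=\sum_{y}f_x(y)\big(\pi_\theta(y|x)-\pi_0(y|x)\big)$. Use a sum over the countable set of finite token sequences; an integral works identically. Since both policies are probability distributions, the differences sum to zero, so we may subtract the constant $\tfrac12$ from $f_x$ without changing $\Delta(x)$. The shifted function $f_x-\tfrac12$ takes values in $[-\tfrac12,\tfrac12]$ by Assumption 1, which gives
\[
|\Delta(x)|\le \tfrac12\sum_y|\pi_\theta(y|x)-\pi_0(y|x)| = \mathrm{TV}\big(\pi_\theta(\cdot|x),\pi_0(\cdot|x)\big).
\]
Equivalently, $\mathrm{TV}$ is the supremum of $|\mathbb{E}_P g-\mathbb{E}_Q g|$ over functions $g$ with values in $[0,1]$, and $f_x$ is one such function.

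Second, apply Pinsker's inequality per prompt: $\mathrm{TV}(P,Q)\le\sqrt{\tfrac12 D_{\mathrm{KL}}(P\|Q)}$. This is stronger than the claimed $\sqrt{2D_{\mathrm{KL}}}$, so the stated bound follows with room to spare. The slack of a factor $2$ suggests the authors used the crude bound $|\Delta(x)|\le\sum_y|\pi_\theta-\pi_0|=2\,\mathrm{TV}$ together with $\mathrm{TV}\le\sqrt{\tfrac12 D_{\mathrm{KL}}}$. Either route suffices. If $D_{\mathrm{KL}}=\infty$, which happens when $\pi_\theta$ is not absolutely continuous with respect to $\pi_0$, the bound is trivial. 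Softmax policies have full support, so this case does not arise here anyway.

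Third, average over prompts. By Assumption 2, the prompt distribution $\mathcal{D}_{\text{eval}}$ is the same for both policies, so the difference of the two outer expectations equals $\mathbb{E}_{x\sim\mathcal{D}_{\text{eval}}}[\Delta(x)]$. Jensen's inequality for the absolute value then gives $|\mathbb{E}_x\Delta(x)|\le\mathbb{E}_x|\Delta(x)|$, and substituting the per-prompt bound finishes the proof.

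The only delicate points are measurability and integrability of $x\mapsto\Delta(x)$ and of the KL term. Both follow from boundedness of $f_x$ and nonnegativity of KL, so the right-hand side is a well-defined, possibly infinite, expectation. I expect no real obstacle beyond stating Pinsker's inequality with the correct constant.
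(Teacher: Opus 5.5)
Your proof is correct and follows the paper's route: a per-prompt Hölder bound by total variation, then Pinsker's inequality, then the triangle inequality over prompts. The paper uses exactly the cruder step you anticipated, $|\Delta(x)|\le\sum_y|\pi_\theta(y|x)-\pi_0(y|x)|=2\,d_{\mathrm{TV}}$, which is where the constant $\sqrt{2\,D_{\mathrm{KL}}}$ comes from; your centering of $f_x$ at $\tfrac12$ is valid and tightens this to $\sqrt{\tfrac12 D_{\mathrm{KL}}}$, so the stated bound holds with a factor-of-two margin.
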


\noindent\textbf{Proof sketch.}
Fix a prompt $x$. The difference in expected score between $\pi_\theta$ and $\pi_0$ can be written as an inner product between the score function $f_x$ (bounded in $[0,1]$) and the signed measure $(\pi_\theta-\pi_0)$. By $L_1$/$L_\infty$ duality, this difference is at most twice the total variation distance between $\pi_\theta(\cdot|x)$ and $\pi_0(\cdot|x)$. Pinsker's inequality upper-bounds total variation by $\sqrt{\tfrac{1}{2}\mathrm{KL}}$, yielding the stated bound. The full proof is in Appendix~\ref{app:forgetting_bound_proof}.

\section{Experiments}
\label{sec:experiments}

\subsection{Experimental Setup}

\noindent\textbf{Task groups and evaluation protocol.}
We evaluate post-training along three axes: \textcolor{TrainColor}{\textbf{(1) In-domain target performance}} on the task used for training,
\textcolor{TransferColor}{\textbf{(2) cross-task transfer}} to other downstream tasks not seen during training,
and \textcolor{EvalColor}{\textbf{(3) held-out retention}} on capability benchmarks never used for post-training. For details, please refer to Appendix~\ref{app:exp_setup}.

\noindent\textbf{Datasets.}
We consider the following task groups:

\noindent
\textcolor{TrainColor}{\textbf{(1) Target tasks (used for training).}}
We fine-tune on \textbf{Math-500}~\cite{lightman2023lets}, \textbf{MATH}~\cite{hendrycks2021measuring}, \textbf{GSM8K}~\cite{cobbe2021gsm8k}, and \textbf{Countdown}~\cite{tinyzero} (verifiable multi-step reasoning),
and also on \textbf{IFEval}~\cite{zhou2023instructionfollowingevaluationlargelanguage} (instruction following) and \textbf{MMLU}~\cite{hendrycks2021ethics, hendryckstest2021} (knowledge). These tasks span both non-unique solution spaces (math) and more rigid evaluation formats (IFEval/MMLU), enabling targeted tests of our hypotheses. For MATH-500, MATH, GSM8K, and Countdown, the downstream performance is measured using the test split of the datasets. For IfEval, we evaluate on the training set, while for MMLU, the downstream performance is measured on MMLU-Pro~\cite{wang2024mmlu}.

\noindent
\textcolor{TransferColor}{\textbf{(2) Cross-task transfer (held-out downstream tasks).}}
For each run trained on a specific target task $\mathcal{T}_{\text{train}}$ (e.g., \textsc{MATH}),
we additionally evaluate the post-trained model on the remaining target tasks
$\mathcal{T}_{\text{other}}=\mathcal{T}_{\text{tgt}}\setminus\{\mathcal{T}_{\text{train}}\}$.
We report the \emph{average delta} relative to the base model on $\mathcal{T}_{\text{other}}$ as a cross-task generalization score,
capturing whether improvements reflect transferable capability or brittle specialization.

\noindent
\textcolor{EvalColor}{\textbf{(3) Held-out retention suite (never trained on).}}
To measure catastrophic forgetting and robustness beyond the trained distribution, we evaluate on
\textbf{ARC-Challenge}~\cite{allenai:arc} (commonsense/science), \textbf{HotpotQA}~\cite{yang2018hotpotqa} (multi-hop reading), and safety benchmarks
\textbf{WildGuardTest}~\cite{wildguard2024}, \textbf{SafetyBench}~\cite{zhang2023safetybench}, \textbf{WildJailbreak}~\cite{wildteaming2024}.

\noindent\textbf{Models.}
We evaluate across instruction-tuned open-weight LLMs:
\textit{LLaMA-3.1-8B-Instruct}~\cite{grattafiori2024llama}, \textit{Qwen-2.5-7B-Instruct}~\cite{Yang2024Qwen25TR},
\textit{LLaMA-3.2-1B-Instruct}, \textit{Qwen-2.5-1.5B-Instruct}, and \textit{Qwen-2.5-3B-Instruct}. For all SFT-based models, we use \textit{LLaMA-3.3-70B} as the teacher to generate our CoTs and filter them based on corrections. For RL algorithms, we use a simple binary verifier based on answer correctness.

\noindent\textbf{Baselines.}
We compare against strong post-training baselines:
\textcolor{SFTFamColor}{\textbf{(1) Standard SFT}} (ground-truth/teacher references),
\textcolor{SFTFamColor}{\textbf{(2) Self-SFT}} (distill from $\pi_{\theta_0}$ outputs),
\textcolor{SFTFamColor}{\textbf{(3) Final-SFT}} (distill from converged SFT outputs),
and on-policy RL baselines
\textcolor{RLFamColor}{\textbf{(5) GRPO}}~\cite{shao2024deepseekmathpushinglimitsmathematical} and \textcolor{RLFamColor}{\textbf{(6) REINFORCE}}. For all fine-tuning, we fine-tune the full model.

\noindent\textbf{Metrics reported.}
For each training run on $\mathcal{T}_{\text{train}}$, we report:
\textcolor{TrainColor}{\textbf{(i) Target score}} $S_{\text{tgt}}(\mathcal{T}_{\text{train}})$,
\textcolor{TransferColor}{\textbf{(ii) Cross-task transfer}}
$
S_{\text{xfer}}=\frac{1}{|\mathcal{T}_{\text{other}}|}\sum_{b\in\mathcal{T}_{\text{other}}}\big(S_b(\pi_{\text{post}})-S_b(\pi_{\theta_0})\big)
$,
and \textcolor{EvalColor}{\textbf{(iii) Retention/forgetting}} on $\mathcal{T}_{\text{ret}}$, reported both as absolute scores and the aggregate forgetting score $\mathcal{F}$ (defined in Section~\ref{sec:setup}).

\begin{table*}[t]
\centering
\caption{\textbf{Main results across models.} Target-task performance $S_{\text{tgt}}$ (higher is better), cross-task transfer $S_{\text{xfer}}$ (Avg $\Delta$, lower is better), and held-out retention on ARC-C/HotpotQA/SafetyBench. Parentheses denote change vs.\ base model; \best{best}/\secondbest{second-best} highlighted.}
\label{tab:main_results}
\footnotesize
\setlength{\tabcolsep}{3pt}
\renewcommand{\arraystretch}{1.15}

\resizebox{1\textwidth}{!}{%
\begin{tabular}{L{2.6cm}
C{1.05cm} C{1.05cm} C{1.05cm} C{1.05cm} C{1.05cm} C{1.05cm}
C{1.5cm}
L{2.0cm} L{2.0cm} L{2.0cm}}
\toprule
\multirow{2}{*}{\textbf{Method}} &
\multicolumn{6}{c}{\textcolor{TrainColor}{\textbf{Target Tasks }$S_{\text{tgt}}\uparrow$}} &
\multicolumn{1}{c}{\textcolor{TransferColor}{\textbf{Cross-task }$S_{\text{xfer}}\downarrow$}} &
\multicolumn{3}{c}{\textcolor{EvalColor}{\textbf{Held-out Retention }\,$\uparrow$}} \\
\cmidrule(lr){2-7}\cmidrule(lr){8-8}\cmidrule(lr){9-11}
&
\textcolor{TrainColor}{\textbf{Math500}} &
\textcolor{TrainColor}{\textbf{MATH}} &
\textcolor{TrainColor}{\textbf{GSM8K}} &
\textcolor{TrainColor}{\textbf{Count.}} &
\textcolor{TrainColor}{\textbf{IFEval}} &
\textcolor{TrainColor}{\textbf{MMLU}} &
\textcolor{TransferColor}{\makecell{\textbf{Avg $\Delta$}\\\textbf{other tgt}}} &
\textcolor{EvalColor}{\textbf{ARC-C}} &
\textcolor{EvalColor}{\textbf{Hotpot}} &
\textcolor{EvalColor}{\textbf{SafetyBench}} \\
\midrule

\modelhdr{Qwen-2.5-1.5B-Instruct}
\rowcolor{ZeroShade}
\mZero & 46.6 & 48.1 & 68.9 & 13.2 & 38.3 & 30.8 & -- & 66.9 & 43.1 & 35.1 \\
\rowcolor{SFTShade}
\mSFT  & 58.4 & 62.4 & \secondbest{77.3} & 25.2 & \secondbest{54.1} & \best{39.0} & \negd{26.2} &
42.3\negdelta{24.6} & 19.4\negdelta{23.7} & 31.4\negdelta{3.7} \\
\rowcolor{SFTShade}
\mSelf & 54.9 & 57.1 & 73.1 & 29.4 & 47.3 & 34.7 & \negd{18.4} &
51.4\negdelta{15.5} & 27.5\negdelta{15.6} & 34.3\negdelta{0.8} \\
\rowcolor{SFTShade}
\mFinal& 57.1 & 61.9 & \best{78.0} & 25.1 & 53.1 & 36.8 & \negd{24.3} &
48.3\negdelta{18.6} & 25.3\negdelta{17.8} & 34.0\negdelta{1.1} \\
\rowcolor{RLShade}
\mReinforce & 56.9 & 60.1 & 75.2 & 26.1 & 49.3 & 34.7 & \negd{3.4} &
63.8\negdelta{3.1} & 40.1\negdelta{3.0} & \secondbest{35.0\negdelta{0.1}} \\
\rowcolor{RLShade}
\mGRPO      & \secondbest{58.9} & \best{63.5} & 76.3 & \best{35.2} & 53.6 & 35.2 & \best{\negd{1.2}} &
\best{66.7\negdelta{0.2}} & \best{42.0\negdelta{1.1}} & \best{36.3\posdelta{1.3}} \\
\rowcolor{green!8}
\mOurs & \best{59.0} & \secondbest{62.8} & 76.8 & \secondbest{32.4} & \best{54.4} & \secondbest{38.6} & \secondbest{\negd{2.9}} &
\secondbest{65.4\negdelta{1.4}} & \secondbest{41.4\negdelta{1.7}} & 34.8\negdelta{0.3} \\
\midrule

\modelhdr{Qwen-2.5-3B-Instruct}
\rowcolor{ZeroShade}
\mZero & 60.2 & 62.0 & 85.2 & 29.6 & 60.6 & 43.8 & -- & 80.5 & 53.6 & 35.7 \\
\rowcolor{SFTShade}
\mSFT  & 76.4 & 80.3 & \best{90.2} & 49.2 & \secondbest{71.3} & \best{54.0} & \negd{39.2} &
42.7\negdelta{37.8} & 24.1\negdelta{29.5} & 31.1\negdelta{4.6} \\
\rowcolor{SFTShade}
\mSelf & 74.2 & 77.4 & 87.3 & 48.6 & 67.9 & 47.2 & \negd{29.3} &
51.8\negdelta{28.7} & 33.1\negdelta{20.5} & 33.6\negdelta{2.1} \\
\rowcolor{SFTShade}
\mFinal& 75.8 & 79.6 & 89.5 & 48.1 & 70.2 & 51.4 & \negd{35.1} &
45.2\negdelta{35.3} & 28.4\negdelta{25.2} & 33.2\negdelta{2.5} \\
\rowcolor{RLShade}
\mReinforce & 75.1 & 78.2 & 88.4 & 47.3 & 67.2 & 49.1 & \negd{4.1} &
78.6\negdelta{1.9} & 51.4\negdelta{2.2} & \best{35.5\negdelta{0.2}} \\
\rowcolor{RLShade}
\mGRPO      & \secondbest{77.4} & \best{81.5} & \secondbest{90.1} & \best{54.2} & 70.9 & 50.8 & \best{\negd{1.9}} &
\best{80.2\negdelta{0.3}} & \best{53.1\negdelta{0.5}} & \secondbest{35.4\negdelta{0.3}} \\
\rowcolor{green!8}
\mOurs & \best{77.8} & \secondbest{81.1} & 88.9 & \secondbest{51.3} & \best{72.0} & \secondbest{53.6} & \secondbest{\negd{2.3}} &
\secondbest{79.2\negdelta{1.3}} & \secondbest{51.7\negdelta{1.9}} & 35.0\negdelta{0.7} \\
\midrule

\modelhdr{LLaMA-3.1-8B-Instruct}
\rowcolor{ZeroShade}
\mZero & 47.2 & 46.0 & 84.1 & 18.6 & 67.0 & 46.2 & -- & 84.1 & 52.6 & 33.8 \\
\rowcolor{SFTShade}
\mSFT  & \secondbest{66.7} & 64.6 & \best{89.6} & 39.2 & 75.2 & \best{55.1} & \negd{32.9} &
53.2\negdelta{30.9} & 28.2\negdelta{24.4} & 30.1\negdelta{3.7} \\
\rowcolor{SFTShade}
\mSelf & 60.4 & 60.4 & 87.4 & 36.4 & 71.6 & 50.8 & \negd{24.9} &
64.2\negdelta{19.9} & 39.1\negdelta{13.5} & 32.6\negdelta{1.2} \\
\rowcolor{SFTShade}
\mFinal& 64.3 & 63.1 & 88.8 & 37.4 & 73.8 & 53.2 & \negd{31.4} &
60.8\negdelta{23.3} & 36.2\negdelta{16.4} & 32.2\negdelta{1.6} \\
\rowcolor{RLShade}
\mReinforce & 63.9 & 61.8 & 88.1 & 34.8 & 72.2 & 49.3 & \negd{4.9} &
82.1\negdelta{2.0} & 50.6\negdelta{2.0} & \secondbest{33.6\negdelta{0.2}} \\
\rowcolor{RLShade}
\mGRPO      & \best{70.5} & \best{65.4} & \secondbest{89.4} & \best{41.2} & \secondbest{75.6} & 50.8 & \best{\negd{0.9}} &
\best{83.8\negdelta{0.3}} & \best{52.1\negdelta{0.5}} & \best{34.1\posdelta{0.3}} \\
\rowcolor{green!8}
\mOurs & 66.3 & \secondbest{64.9} & 89.2 & \secondbest{40.4} & \best{76.1} & \secondbest{54.6} & \secondbest{\negd{2.3}} &
\secondbest{83.1\negdelta{1.0}} & \secondbest{51.4\negdelta{1.2}} & 33.0\negdelta{0.8} \\
\bottomrule

\end{tabular}%
}
\vspace{-4mm}
\end{table*}

\subsection{Main Results}
\label{sec:results}

We now answer RQ1--RQ4 using Table~\ref{tab:main_results}. Full results can be found in Appendix~\ref{app:full}. Across model families and scales, a consistent pattern emerges: \textbf{standard SFT reliably improves target-task accuracy but induces large cross-task interference and severe retention collapse}, while \textbf{TMS achieves near-RL retention with SFT-level performance}.

\begin{rqbox}
\noindent\textbf{RQ1 (Target performance).} \textit{Does TMS outperform standard SFT on the trained (in-domain) target task under matched compute?}
\end{rqbox}

\noindent\textbf{Target tasks: TMS preserves (and often improves) the SFT gains.}
Across all three models, TMS matches or exceeds SFT on most target benchmarks, indicating that trajectory mixing does not ``wash out'' learning signal.
For \textit{Qwen-2.5-1.5B}, TMS attains the best Math500 score and the best IFEval score, while remaining competitive on the remaining targets (Table~\ref{tab:main_results}).
For \textit{Qwen-2.5-3B}, TMS improves over SFT on Math500 and IFEval, and stays close on MATH/GSM8K/MMLU. For \textit{LLaMA-3.1-8B}, TMS improves IFEval and MMLU, while matching SFT on the remaining targets. Overall, TMS consistently retains the primary benefit of post-training, \emph{i.e.}, in-domain competence, while avoiding the instability.

\begin{rqbox}
\noindent\textbf{RQ2 (Forgetting).} \textit{Does TMS reduce catastrophic forgetting on a held-out retention suite relative to standard SFT?}
\end{rqbox}

\noindent\textbf{Retention: TMS sharply reduces the collapse induced by SFT.}
The most striking effect in Table~\ref{tab:main_results} is that SFT causes pronounced degradation on held-out capabilities, despite improving target-task accuracy.
For \textit{Qwen-2.5-1.5B}, standard SFT severely degrades ARC-C and HotpotQA (\emph{e.g.}, ARC-C drops from 66.9 to 42.3), whereas TMS retains near-base performance.
The same qualitative pattern appears at larger scales: on \textit{Qwen-2.5-3B}, SFT collapses ARC-C from 80.5 to 42.7, while TMS preserves 79.2; on \textit{LLaMA-3.1-8B}, SFT drops ARC-C from 84.1 to 53.2, while TMS retains 83.1. These large retention gaps demonstrate that trajectory mixing is an effective intervention against catastrophic forgetting, consistent with the mismatch and mode-collapse diagnoses in Section~\ref{sec:dissection}.

\begin{figure}[t]
  \centering
  \hspace{-4mm}
  \includegraphics[width=0.8\linewidth]{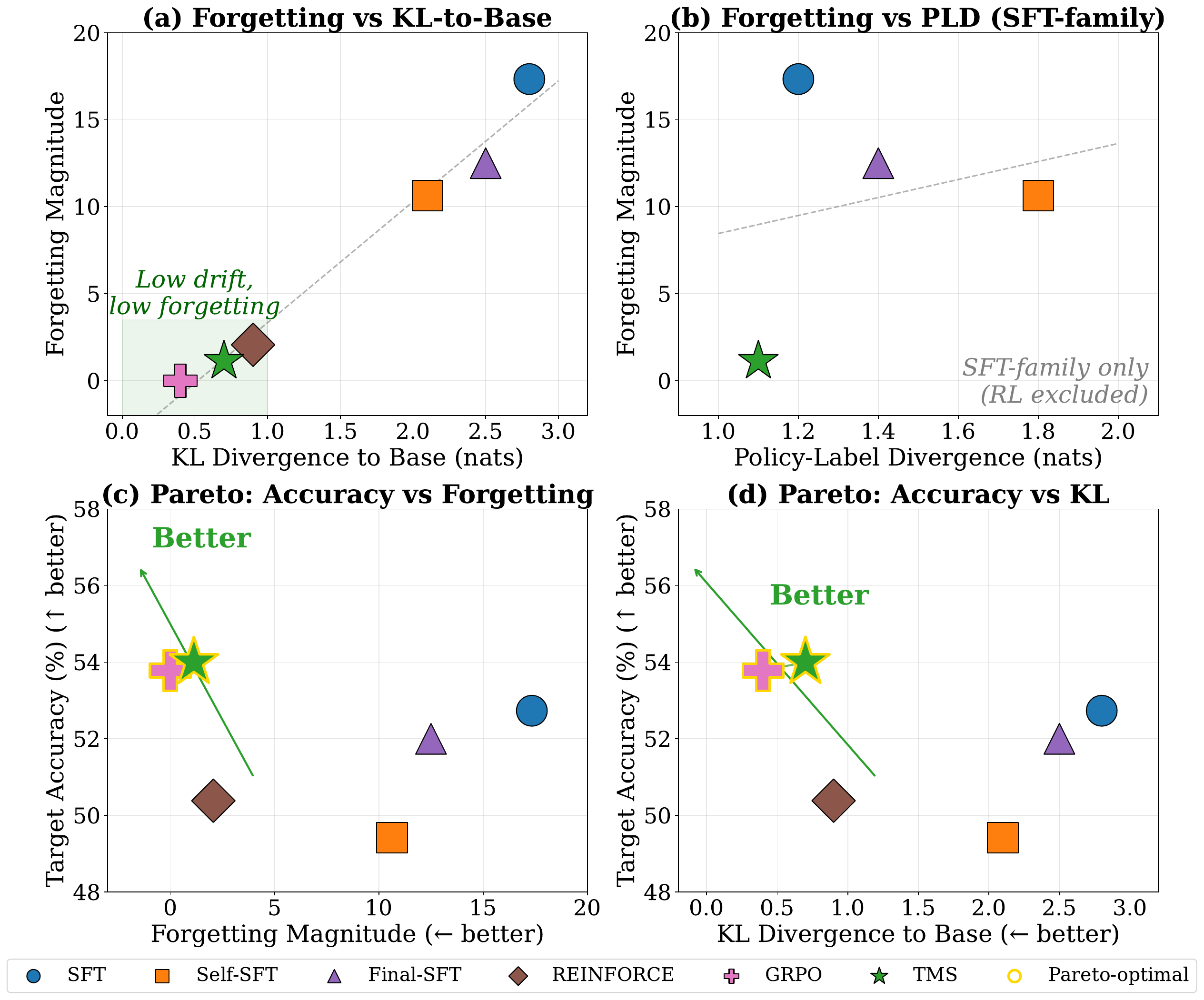}
  \caption{\textbf{RQ5 (Mechanistic law + Pareto).} (a) Forgetting magnitude increases with KL-to-base. (b) PLD predicts forgetting within the SFT-family (RL excluded). (c--d) Pareto frontiers show TMS moving closer to the accuracy--retention frontier relative to SFT/self-distillation.}
  \label{fig:pareto}
  \vspace{-7mm}
\end{figure}

\begin{figure*}[t]
  \centering
  \includegraphics[width=0.9\linewidth]{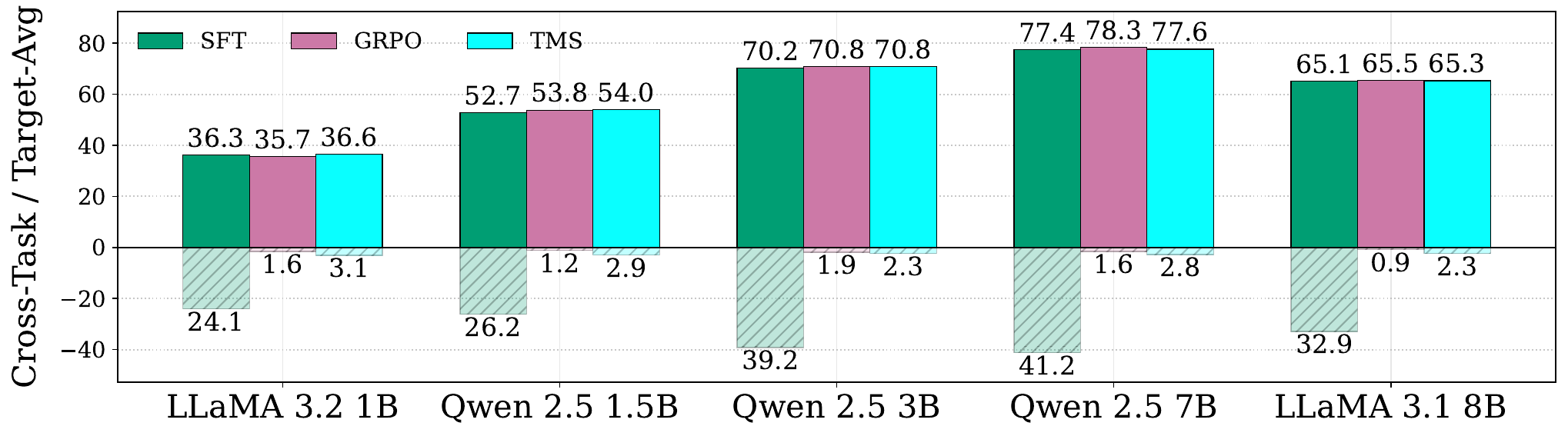}
  \caption{\textbf{RQ6 (Scaling):} Target Avg (higher is better) and Cross-task (lower is better) across model scales. TMS tracks GRPO-style low cross-task drift while matching SFT-level target accuracy.}
  \label{fig:scale}
  \vspace{-5mm}
\end{figure*}

\begin{rqbox}
\noindent\textbf{RQ3 (RL parity without rewards).} \textit{Can TMS match the accuracy--retention tradeoff of on-policy RL baselines \emph{without} reward models or verifiers?}
\end{rqbox}

\noindent\textbf{Accuracy--retention frontier: TMS tracks on-policy RL while remaining reward-free.}
On-policy RL baselines (GRPO/REINFORCE) exhibit a characteristic signature across models: strong retention with low cross-task drift, but at non-trivial pipeline complexity. TMS attains a similar operating point without rewards, critics, or on-policy optimization.
For \textit{Qwen-2.5-3B}, TMS is close to GRPO on retention (ARC-C: 79.2 vs.\ 80.2; HotpotQA: 51.7 vs.\ 53.1) while remaining competitive on target tasks (e.g., Math500: 77.8 vs.\ 77.4; IFEval: 72.0 vs.\ 70.9). For \textit{LLaMA-3.1-8B}, TMS similarly tracks GRPO on retention, with comparable target accuracy. The gap is most visible in the cross-task transfer column: GRPO tends to minimize cross-task drop, while TMS incurs a modestly larger drop, yet still dramatically improves over SFT. Taken together, these results place TMS near the same Pareto region as on-policy RL, supporting the claim that \emph{near-policy supervision} can be approximated through trajectory mixing alone.

\begin{rqbox}
\noindent\textbf{RQ4 (Beyond self-distillation).} \textit{Are TMS gains explained by simple self-distillation (one checkpoint) or does the \emph{trajectory mixture} provide additional benefit?}
\end{rqbox}

\noindent\textbf{Beyond single-checkpoint distillation: trajectory mixing is essential.}
Self-SFT and Final-SFT provide informative baselines: they replace oracle labels with a single policy snapshot, removing part of the supervision mismatch while keeping the training procedure identical. Across all models, these baselines improve retention relative to standard SFT but remain consistently below TMS. For \textit{Qwen-2.5-1.5B}, Self-SFT improves ARC-C to 51.4, yet TMS achieves 65.4; similar gaps hold on HotpotQA. For \textit{Qwen-2.5-3B}, Self-SFT recovers ARC-C to 51.8, while TMS preserves 79.2; and for \textit{LLaMA-3.1-8B}, Self-SFT achieves 64.2 on ARC-C versus 83.1 for TMS. These patterns rule out an explanation based solely on ``distill-to-self'' and indicate that \textbf{mixing supervision across the training trajectory}, rather than selecting any single checkpoint, is the key ingredient.

\begin{rqbox}
\noindent\textbf{RQ5 (Task dependence).} \textit{When does TMS help most: which task properties (e.g., non-unique solutions vs. rigid formats) predict larger gains?}
\end{rqbox}

\vspace{-2mm}
\noindent\textbf{Task dependence: larger gains appear on flexible, multi-solution targets.}
Table~\ref{tab:main_results} suggests a consistent pattern aligned with our dissection in Section~\ref{sec:dissection}: TMS yields its clearest advantages on targets where solution spaces are non-unique and supervision mismatch is common (\emph{e.g.}, math and instruction-following). On such tasks, TMS typically matches SFT on target metrics while substantially improving held-out retention. In contrast, on more rigid or multiple-choice-style targets (\emph{e.g.}, MMLU), TMS is generally competitive, but the gap to SFT is larger.

\subsection{Mechanistic and Secondary Analyses}
\label{sec:mech_scale_task_safety}
For further analyses, please refer to Appendix~\ref{sec:additional}.

\begin{rqbox}
\noindent\textbf{RQ6 (Mechanistic law).} \textit{When we compute (i) KL-to-base on new-task prompts, (ii) PLD to the supervision distribution of each method, and (iii) forgetting on the held-out suite, do KL/PLD reliably predict forgetting, and does TMS shift the Pareto frontier relative to SFT?}
\end{rqbox}

\vspace{-2mm}
\noindent\textbf{KL-to-base predicts forgetting, and TMS occupies the low-drift/low-forgetting regime.}
Figure~\ref{fig:pareto}a shows a strong monotonic relationship between KL-to-base and forgetting magnitude: methods that drift farther from the base model tend to forget more. In particular, SFT-family baselines exhibit both high KL drift and large forgetting, while GRPO and TMS cluster in the lower-left ``low drift, low forgetting'' region. This directly supports and provides a mechanistic explanation for why TMS retains capabilities despite matching target-task performance.

\noindent\textbf{PLD is most informative within the SFT family.}
When PLD is computed w.r.t.\ the SFT-style supervision distribution, it is not directly comparable across RL methods. Accordingly, Figure~\ref{fig:pareto}b reports PLD--forgetting trends within the SFT-family only (RL excluded). In this regime, higher PLD aligns with higher forgetting, and TMS achieves the lowest PLD among SFT-style methods while also exhibiting the smallest forgetting magnitude.

\noindent\textbf{Pareto analysis: TMS shifts the accuracy--retention frontier relative to SFT.}
Figure~\ref{fig:pareto}c--d visualize the core tradeoffs. In the accuracy vs.\ forgetting plane (c), and accuracy vs.\ KL plane (d), SFT and self-distillation baselines are dominated: they incur substantially higher forgetting and drift at comparable accuracy. In contrast, TMS lies on (or very near) the Pareto-optimal set alongside GRPO, indicating that it achieves a better accuracy--retention tradeoff than SFT.

\begin{rqbox}
\noindent\textbf{RQ7 (Scaling).} \textit{How do TMS effects vary with model scale, in both target performance and retention?}
\end{rqbox}

\vspace{-2mm}
\noindent\textbf{Scaling: TMS remains stable across 1B--8B and preserves the GRPO-like low cross-task drift.}
Figure~\ref{fig:scale} summarizes Target Avg and Cross-task transfer across five model sizes. Two trends are consistent: (i) Target Avg increases predictably with scale for all methods, and TMS matches SFT and GRPO closely; (ii) cross-task drift under SFT remains large across scales, while TMS stays close to GRPO with small cross-task drops. This indicates the TMS advantage is not a small-model artifact and persists across families and sizes.


\section{Conclusion}
We studied why post-training with static supervision (SFT) often improves target-task performance at the cost of broad capability retention. We introduced \textbf{Trajectory-Mixed Supervision (TMS)}, a reward-free alternative that turns the SFT optimization path into a supervision curriculum, reducing \emph{policy--label mismatch} and mitigating mode collapse. Across model families and scales, TMS matches SFT-level target gains while approaching on-policy RL (GRPO/REINFORCE) in retention and cross-task stability. Mechanistically, KL-to-base reliably predicts forgetting, and TMS shifts the accuracy--retention Pareto frontier relative to SFT. These results suggest that \emph{on-policy-like stability can be recovered without rewards} by designing supervision to track the model's evolving support, making TMS a simple drop-in primitive for robust post-training.

\section*{Impact Statement}
This paper proposes \emph{Trajectory-Mixed Supervision (TMS)}, a reward-free post-training method intended to improve the accuracy--retention tradeoff of large language models by reducing supervision mismatch and catastrophic forgetting. If adopted, TMS could yield practical benefits such as more reliable continual updates, lower dependence on reward models or complex RL pipelines, and improved stability of safety-related behaviors under post-training.

At the same time, improving retention and robustness can also increase the capability of models in ways that may be misused (e.g., retaining stronger general problem-solving ability while being adapted for a specialized domain). In addition, TMS relies on model-generated supervision from intermediate checkpoints; if the base model exhibits harmful or biased behaviors, mixing trajectory outputs may propagate such patterns unless mitigated through careful data curation and safety evaluation. To reduce these risks, we report held-out safety benchmarks (e.g., SafetyBench and jailbreak ASR) and encourage practitioners to pair TMS with standard deployment safeguards, auditing, and domain-appropriate oversight.

Overall, this work aims to advance the science of post-training dynamics and the design of safer, more stable adaptation procedures.
 
\section*{Acknowledgment}

This research was, in part, funded by the CISCO Faculty Award. The views and conclusions contained in this document are those of the authors and should not be interpreted as representing official policies, either expressed or implied, of the funding organizations.

\newpage
\bibliography{999_reference}
\bibliographystyle{style/icml2025}

\titlespacing*{\section}{0pt}{*1}{*1}
\titlespacing*{\subsection}{0pt}{*1.25}{*1.25}
\titlespacing*{\subsubsection}{0pt}{*1.5}{*1.5}

\setlength{\abovedisplayskip}{\baselineskip} 
\setlength{\abovedisplayshortskip}{0.5\baselineskip} 
\setlength{\belowdisplayskip}{\baselineskip}
\setlength{\belowdisplayshortskip}{0.5\baselineskip}

\clearpage
\appendix
\label{sec:append}
\part*{Appendix}
{
\setlength{\parskip}{-0em}
\startcontents[sections]
\printcontents[sections]{ }{1}{}
}

\setlength{\parskip}{.5em}
\clearpage




\section{LLM Usage Disclosure} \label{sec:llm-usage}
To enhance clarity and readability, we utilized LLMs (specifically OpenAI GPT-5) exclusively as a language polishing tool. Its role was confined to proofreading, grammatical correction, and stylistic refinement: functions analogous to those provided by traditional grammar checkers and dictionaries. This tool did not contribute to the generation of new scientific content or ideas, and its usage is consistent with standard practices for manuscript preparation.

\section{Extended Related Work}
\label{sec:appendix_related_work}

\paragraph{Post-Training Dynamics and Forgetting (Extended).}
Prior work has analyzed how neural optimization trajectories reshape the loss landscape and contribute to forgetting in sequential training regimes~\cite{goodfellow2015empiricalinvestigationcatastrophicforgetting,mccloskey1989catastrophic}.
To mitigate drift and forgetting, existing approaches broadly fall into several categories.
Regularization-based methods constrain parameter updates to preserve previously learned behaviors, e.g., by penalizing deviations from important weights or reference representations, trading off plasticity for stability~\cite{kirkpatrick2017overcoming,zenke2017continuallearningsynapticintelligence,aljundi2018memoryawaresynapseslearning,li2017learningforgetting,li-etal-2024-revisiting}.
Replay and rehearsal strategies maintain a small memory of past data (or synthetic samples) and interleave them during continued post-training to prevent collapse on earlier distributions~\cite{lopezpaz2022gradientepisodicmemorycontinual,rebuffi2017icarlincrementalclassifierrepresentation,krawczyk2024analysis,klasson2022learn}.
Parameter-efficient adaptation limits the effective degrees of freedom of post-training, reducing interference with pre-trained knowledge while still enabling task adaptation~\cite{houlsby2019parameterefficienttransferlearningnlp,li2021prefixtuningoptimizingcontinuousprompts,hu2021loralowrankadaptationlarge,liu2022few,hu2023llm}.
Together, these lines of work suggest that controlling which data drives post-training and how strongly updates reshape the model are both critical for maintaining capability retention under continual optimization.

\paragraph{Preference Optimization and On-Policy Alignment (Extended).}
Follow-up work further studies continual post-training and reports that reinforcement fine-tuning can mitigate forgetting over time, while RL fine-tuning can also alleviate OOD forgetting introduced by SFT and improve robustness under distribution shift~\cite{lai2026reinforcementfinetuningnaturallymitigates,jin2025rlfinetuninghealsood}.
Beyond RL and SFT, recent analyses emphasize distributional considerations in reasoning and alignment, showing that matching the training data distribution to a model's own generation distribution can be an important driver of stable improvements~\cite{chandra2025shapethoughtdistributionmatters,cheng2025revisiting}.
While these findings highlight the benefits of on-policy training, existing approaches rely on explicit rewards, verifiers, or preference signals.

\paragraph{Trajectory Learning and Data Selection Curricula (Extended).}
Recent work increasingly leverages model-generated trajectories including rollouts~\cite{wang2023selfconsistencyimproveschainthought}, intermediate reasoning traces~\cite{zelikman2022starbootstrappingreasoningreasoning,Guo_2025}, and self-evaluations as training signals~\cite{yuan2025selfrewardinglanguagemodels,bai2022constitutional}, enabling self-distillation~\cite{mukherjee2023orcaprogressivelearningcomplex,agarwal2024policy} and iterative refinement without additional labels~\cite{madaan2023selfrefineiterativerefinementselffeedback,zelikman2024quietstarlanguagemodelsteach}.
Beyond RL, analyses of reasoning post-training emphasize distributional considerations, showing that matching the training distribution to a model's own generation distribution can be a key driver of stable improvements, sometimes even more influential than strict correctness of supervision~\cite{chandra2025shapethoughtdistributionmatters,cheng2025revisiting}.
These findings naturally connect trajectory learning to dynamic data selection and curriculum learning, where trajectories provide feedback about the learner's evolving state and motivate adaptive sampling strategies that improve robustness and sample efficiency over static training sets~\cite{cheng2025revisiting,chu2025sft}.

\section{TMS Algorithm}
\label{sec:algorithm}
We provide the full Trajectory-Mixed Supervision (TMS) procedure for transparency and reproducibility. TMS turns a \emph{single} baseline post-training trajectory into a \emph{distribution} over supervision targets. Concretely, instead of training a student against a fixed reference $y^*$ (which can induce supervision-mismatch drift and mode collapse), we harvest intermediate policy outputs along the training path and sample targets from their mixture, yielding near-policy supervision that better preserves diverse solution modes (Figure~\ref{fig:teaser}).

\noindent\textbf{Inputs and outputs.}
TMS takes as input a base policy $\pi_{\theta_0}$ and a supervised dataset $\mathcal{D}=\{(x_i,y_i^*)\}_{i=1}^N$. It outputs a post-trained student policy $\pi_\phi$ trained on \emph{trajectory-mixed} targets. The hyperparameter $T$ controls how many checkpoints are used (trajectory resolution), and $\alpha\in[0,1]$ controls how often we fall back to oracle/teacher labels (quality anchoring).

\noindent\textbf{Two-stage procedure.}
TMS is implemented in two stages. \textbf{Stage 1} harvests a trajectory buffer by running a baseline post-training procedure (typically SFT, but any deterministic update path can be used) and saving $T$ checkpoints $\{\theta_1,\dots,\theta_T\}$. At each checkpoint $t$, we generate one (or a small number of) sampled outputs $\hat{y}^{(t)}(x)$ for each prompt $x$ (or a representative subset) and store them in a buffer $\mathcal{H}[t,x]$. \textbf{Stage 2} trains a student initialized from $\theta_0$ using standard token-level NLL, but with targets drawn from a mixture: with probability $\alpha$ we train on the reference label $y^*$; otherwise we sample a checkpoint index $t\sim p(t)$ (uniform in Algorithm~\ref{alg:tms} unless otherwise specified) and train on the corresponding trajectory target $\mathcal{H}[t,x]$.

\noindent\textbf{Practical notes.}
(i) The checkpoint distribution $p(t)$ can be uniform (default) or non-uniform (e.g., early-/mid-/late-heavy) to test ``window'' effects; (ii) storing text sequences is sufficient for our experiments and is cheaper than storing logits; and (iii) to control compute, trajectory harvesting can be performed on a subset of prompts without changing the training objective.

\begin{algorithm}[h]
\caption{Trajectory-Mixed Supervision}
\label{alg:tms}
\footnotesize
\begin{algorithmic}[1]

\STATE \textbf{Input:} Base policy $\pi_{\theta_0}$; dataset $\mathcal{D}=\{(x_i,y_i^*)\}_{i=1}^N$; checkpoints $T$; mixing weight $\alpha$.

\STATE \textbf{Stage 1: Harvest trajectory outputs.}
\STATE Run baseline post-training (e.g., SFT) and save checkpoints $\{\theta_1,\dots,\theta_T\}$.
\FOR{$t=1$ to $T$}
  \FOR{each $x \in \mathcal{D}_x$ (or subset)}
    \STATE Generate $\hat{y}^{(t)}(x)\sim \pi_{\theta_t}(\cdot\mid x)$; store in $\mathcal{H}[t,x]$.
  \ENDFOR
\ENDFOR

\STATE \textbf{Stage 2: Train student on trajectory-mixed supervision.}
\STATE Initialize student $\phi \gets \theta_0$.
\WHILE{training budget not exhausted}
  \STATE Sample minibatch $x$ from $\mathcal{D}_x$.
  \FOR{each $x$ in minibatch}
    \STATE With probability $\alpha$, set $\tilde{y}\gets y^*$.
    \STATE \hspace{1.3em}Else sample $t\sim\mathrm{Uniform}(\{1,\dots,T\})$ and set $\tilde{y}\gets \mathcal{H}[t,x]$.
    \STATE Add $(x,\tilde{y})$ to $B_{\mathrm{mix}}$.
  \ENDFOR
  \STATE Update $\phi$ by minimizing token-level NLL on $B_{\mathrm{mix}}$.
\ENDWHILE

\STATE \textbf{Output:} Student policy $\pi_\phi$.

\end{algorithmic}
\end{algorithm}

\section{A Unified View Through Mismatch and Modes}
\label{sec:unified_view}

Our analysis in Section~\ref{sec:dissection} suggests that post-training instability is not a single phenomenon. Rather, it arises from two orthogonal mechanisms: (i) \emph{supervision mismatch drift} when a moving policy is trained against static supervision, and (ii) \emph{mode collapse} when single-reference cross-entropy suppresses alternative valid solutions. This section provides a unified interpretation of TMS as an intervention that addresses both mechanisms using a single principle: \textbf{replace rigid supervision with near-policy, trajectory-supported supervision}.

\paragraph{(A) Reducing supervision mismatch drift via near-policy targets.}
Failure Mode A arises because SFT optimizes $\pi_{\theta_t}$ against a fixed supervision distribution $q(\cdot|x)$; as $\pi_{\theta_t}$ evolves, the mismatch between the current policy and the static targets can grow, which manifests as rising held-out PLD and increasingly corrective gradients. TMS replaces the static target distribution with a \emph{trajectory mixture} $m(\cdot|x)$ whose support is formed by samples from intermediate policies along the same optimization path. Since each component policy $\pi_{\theta_t}$ is itself a feasible model distribution, targets drawn from $m$ are \emph{reachable} throughout training. Empirically, this reduces held-out PLD and mitigates the late-stage mismatch drift observed in Table~\ref{tab:mismatch_drift}.

\paragraph{(B) Preventing mode collapse by preserving trajectory support.}
Failure Mode B occurs when single-reference cross-entropy concentrates probability mass on one trajectory $y^*$ and implicitly penalizes other valid trajectories. In contrast, the mixture $m(\cdot|x)$ aggregates outputs from checkpoints spanning early (higher-entropy, diverse) to late (more accurate) policies. Training on this mixture encourages the student to assign non-trivial probability to multiple historically plausible solution modes, improving coverage and diversity on non-unique tasks. Consistent with this view, TMS maintains higher Pass@K and answer entropy than single-reference SFT (Table~\ref{tab:mode_collapse}), acting as a \emph{mode-preserving regularizer} derived from the model's own learning dynamics.

\paragraph{(C) Relation to RLFT without rewards.}
On-policy RLFT is often more stable because training targets are generated from the current policy (on-policy in $y|x$), keeping updates near-policy. TMS achieves a similar effect \emph{without} rewards or value estimation: the trajectory mixture $m$ can be interpreted as a time-averaged approximation of on-policy supervision, where targets are sampled from policies that the model actually inhabited during training. This provides a reward-free route to near-policy learning and helps explain why TMS improves the accuracy--retention tradeoff relative to static-label SFT.

\section{Proof of Theorem~\ref{thm:forgetting_bound}}
\label{app:forgetting_bound_proof}

\noindent\textbf{Assumptions.}
\begin{itemize}[leftmargin=*,itemsep=1pt]
    \item \textbf{Assumption 1 (Bounded scoring function).} For each prompt $x$, the held-out task defines a scoring function $f_x:\mathcal{Y}\rightarrow[0,1]$ (e.g., correctness indicator, normalized reward).
    \item \textbf{Assumption 2 (Evaluation prompt distribution).} Prompts are drawn from a fixed distribution $x\sim \mathcal{D}_{\mathrm{eval}}$ (e.g., prompts from the retention suite).
\end{itemize}

\noindent\textbf{Theorem~\ref{thm:forgetting_bound} (Forgetting is controlled by KL drift).}
Let $\pi_0(\cdot|x)$ be the base policy and $\pi_\theta(\cdot|x)$ a fine-tuned policy. Then the change in expected score on $\mathcal{D}_{\mathrm{eval}}$ satisfies
\[
\Big|
\mathbb{E}_{x\sim \mathcal{D}_{\mathrm{eval}}}\big[\mathbb{E}_{y\sim \pi_\theta(\cdot|x)} f_x(y)\big]
-
\mathbb{E}_{x\sim \mathcal{D}_{\mathrm{eval}}}\big[\mathbb{E}_{y\sim \pi_0(\cdot|x)} f_x(y)\big]
\Big|
\le
\mathbb{E}_{x\sim \mathcal{D}_{\mathrm{eval}}}\Big[\sqrt{2\,D_{\mathrm{KL}}(\pi_\theta(\cdot|x)\,\|\,\pi_0(\cdot|x))}\Big].
\]

\noindent\textbf{Proof.}
Fix an arbitrary prompt $x$. Define the expected score under each policy:
\[
\mu_\theta(x) := \mathbb{E}_{y\sim \pi_\theta(\cdot|x)}[f_x(y)],
\qquad
\mu_0(x) := \mathbb{E}_{y\sim \pi_0(\cdot|x)}[f_x(y)].
\]
We bound $|\mu_\theta(x)-\mu_0(x)|$ in three steps.

\medskip
\noindent\textbf{Step 1: Expand the difference as an inner product.}
\[
\mu_\theta(x)-\mu_0(x)
=
\sum_{y\in\mathcal{Y}} f_x(y)\big(\pi_\theta(y|x)-\pi_0(y|x)\big).
\]

\medskip
\noindent\textbf{Step 2: Apply $L_1/L_\infty$ duality (H\"older).}
Taking absolute values and using $0\le f_x(y)\le 1$ (Assumption 1),
\[
|\mu_\theta(x)-\mu_0(x)|
\le
\sum_{y\in\mathcal{Y}} |f_x(y)|\,|\pi_\theta(y|x)-\pi_0(y|x)|
\le
\sum_{y\in\mathcal{Y}} |\pi_\theta(y|x)-\pi_0(y|x)|.
\]
Recall total variation distance:
\[
d_{\mathrm{TV}}(P,Q) := \frac{1}{2}\sum_{y\in\mathcal{Y}} |P(y)-Q(y)|.
\]
Therefore,
\[
|\mu_\theta(x)-\mu_0(x)|
\le
2\,d_{\mathrm{TV}}\!\left(\pi_\theta(\cdot|x),\,\pi_0(\cdot|x)\right).
\]

\medskip
\noindent\textbf{Step 3: Convert TV to KL via Pinsker's inequality.}
Pinsker's inequality states
\[
d_{\mathrm{TV}}(P,Q)\le \sqrt{\frac{1}{2}D_{\mathrm{KL}}(P\|Q)}.
\]
Applying this with $P=\pi_\theta(\cdot|x)$ and $Q=\pi_0(\cdot|x)$ yields
\[
|\mu_\theta(x)-\mu_0(x)|
\le
2\sqrt{\frac{1}{2}D_{\mathrm{KL}}\!\left(\pi_\theta(\cdot|x)\,\|\,\pi_0(\cdot|x)\right)}
=
\sqrt{2\,D_{\mathrm{KL}}\!\left(\pi_\theta(\cdot|x)\,\|\,\pi_0(\cdot|x)\right)}.
\]

\medskip
\noindent\textbf{Step 4: Take expectation over prompts.}
Taking expectation over $x\sim \mathcal{D}_{\mathrm{eval}}$ (Assumption 2) and using linearity of expectation,
\[
\Big|
\mathbb{E}_{x}[\mu_\theta(x)] - \mathbb{E}_{x}[\mu_0(x)]
\Big|
\le
\mathbb{E}_{x}\Big[|\mu_\theta(x)-\mu_0(x)|\Big]
\le
\mathbb{E}_{x\sim \mathcal{D}_{\mathrm{eval}}}\Big[\sqrt{2\,D_{\mathrm{KL}}(\pi_\theta(\cdot|x)\,\|\,\pi_0(\cdot|x))}\Big].
\]
This completes the proof. $\blacksquare$

\begin{corollary}
Under the assumptions of Theorem~\ref{thm:forgetting_bound},
\[
\Big|\mathbb{E}_{x}[\mu_\theta(x)]-\mathbb{E}_{x}[\mu_0(x)]\Big|
\le
\sqrt{2\,\mathbb{E}_{x\sim \mathcal{D}_{\mathrm{eval}}}\!\left[D_{\mathrm{KL}}(\pi_\theta(\cdot|x)\,\|\,\pi_0(\cdot|x))\right]}.
\]
\end{corollary}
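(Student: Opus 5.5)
The plan is to reuse the per-prompt bound already established in the proof of Theorem~\ref{thm:forgetting_bound}, but to postpone averaging over prompts until after applying a concavity argument. In that proof, Steps 1--3 show that for every fixed prompt $x$,
\[
|\mu_\theta(x)-\mu_0(x)| \le \sqrt{2\,D_{\mathrm{KL}}\!\left(\pi_\theta(\cdot|x)\,\|\,\pi_0(\cdot|x)\right)}.
\]
Step 4 of that proof then gives
\[
\Big|\mathbb{E}_{x}[\mu_\theta(x)]-\mathbb{E}_{x}[\mu_0(x)]\Big| \le \mathbb{E}_{x\sim\mathcal{D}_{\mathrm{eval}}}\Big[\sqrt{2\,D_{\mathrm{KL}}(\pi_\theta(\cdot|x)\,\|\,\pi_0(\cdot|x))}\Big].
\]
I take this as the starting point and need only move the expectation inside the square root.

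To do so, I apply Jensen's inequality to the concave function $u\mapsto\sqrt{2u}$ on $[0,\infty)$. Set $Z(x):=D_{\mathrm{KL}}(\pi_\theta(\cdot|x)\,\|\,\pi_0(\cdot|x))$, which is nonnegative. Concavity then gives
\[
\mathbb{E}_x\big[\sqrt{2Z(x)}\big]\le\sqrt{2\,\mathbb{E}_x[Z(x)]}.
\]
Chaining this with the previous display yields the corollary. As a sanity check, the same step can also be obtained from Cauchy--Schwarz, since $\mathbb{E}[\sqrt{Z}]\le\sqrt{\mathbb{E}[1]\,\mathbb{E}[Z]}$.

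The only delicate point is integrability, and I expect it to cause no difficulty. If $\mathbb{E}_x[Z(x)]=+\infty$, the right-hand side is $+\infty$ and the claim holds trivially. The left-hand side is always at most $1$, because $f_x$ takes values in $[0,1]$. If $\mathbb{E}_x[Z(x)]<\infty$, then $\sqrt{Z}$ is integrable and Jensen applies directly. I would state this case split explicitly and otherwise treat the proof as a two-line consequence of Theorem~\ref{thm:forgetting_bound}. It is also worth noting that the corollary's bound is weaker than the theorem's, but it depends only on the average KL-to-base, which is the quantity actually measured in Figure~\ref{fig:pareto}a.
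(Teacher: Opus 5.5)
Your proposal is correct and takes the same approach as the paper, which also obtains the corollary by applying Jensen's inequality for the concave function $\sqrt{\cdot}$ to the bound of Theorem~\ref{thm:forgetting_bound}. Your explicit handling of the case $\mathbb{E}_x[Z(x)]=+\infty$ is a harmless extra detail that the paper omits.
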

\noindent\textbf{Proof.} Apply Jensen's inequality to $\sqrt{\cdot}$, which is concave. $\blacksquare$

\section{Experimental Setup Details}
\label{app:exp_setup}

This appendix provides dataset-level details, baseline implementation specifics, and training/inference hyperparameters for reproducibility. Definitions of the core evaluation axes (target, cross-task transfer, retention/forgetting) and aggregate metrics are given in the main paper (\S\ref{sec:experiments} and \S\ref{sec:setup}); here we describe how they are implemented for each benchmark.

\subsection{Dataset and Benchmark Details}
\label{app:datasets}

\noindent\textbf{Target tasks (used for training).}
\begin{itemize}[leftmargin=*]
    \item \textcolor{TrainColor}{\textbf{GSM8K.}}~\cite{cobbe2021gsm8k} 7.5K train / 1K test grade-school math word problems. We fine-tune on the official train split and evaluate on the official test split.
    \item \textcolor{TrainColor}{\textbf{MATH.}}~\cite{hendrycks2021measuring} Competition-level math problems. We fine-tune on the official training split and evaluate on the official test split.
    \item \textcolor{TrainColor}{\textbf{Math-500.}}~\cite{lightman2023lets} A 500-problem subset used for fast evaluation and tuning (reported as a target task when used for training).
    \item \textcolor{TrainColor}{\textbf{Countdown.}}~\cite{tinyzero} Synthetic arithmetic task (reach target number with given numbers/operators). We generate 10K training examples and a disjoint held-out set using a fixed generator seed.
    \item \textcolor{TrainColor}{\textbf{IFEval.}}~\cite{zhou2023instructionfollowingevaluationlargelanguage} 541 prompts with verifiable instruction constraints; we report strict constraint satisfaction using the official checker.
    \item \textcolor{TrainColor}{\textbf{MMLU.}}~\cite{hendrycks2021ethics, hendryckstest2021} 57 subjects; we report standard evaluation accuracy and use an auxiliary training split when fine-tuning on MMLU.
    \item \textcolor{TrainColor}{\textbf{ToolAlpaca.}}~\cite{tang2023toolalpaca} A tool-use instruction-following dataset where models must invoke external tools (e.g., calculator/search) and produce multi-step tool-augmented responses. We fine-tune on the provided training split and evaluate on the standard held-out prompts using the ToolAlpaca evaluation protocol.
    \item \textcolor{TrainColor}{\textbf{TextVQA.}}~\cite{singh2019towards} A multimodal visual question answering benchmark focused on reading and reasoning over text in images. We fine-tune on the official training split and evaluate on the standard test split using exact-match accuracy.
    \item \textcolor{TrainColor}{\textbf{DocVQA.}}~\cite{mathew2021docvqa} A document visual question answering benchmark requiring understanding of scanned documents (layout + OCR text). We fine-tune on the official validation split and evaluate on the standard test split using exact-match accuracy.
\end{itemize}

\noindent\textbf{Held-out retention suite (never trained on).}
\begin{itemize}[leftmargin=*]
    \item \textcolor{EvalColor}{\textbf{ARC-Challenge.}}~\cite{allenai:arc} Commonsense/science multiple-choice benchmark.
    \item \textcolor{EvalColor}{\textbf{HotpotQA.}}~\cite{yang2018hotpotqa} Multi-hop question answering; we use the standard evaluation setting (distraction).
    \item \textcolor{EvalColor}{\textbf{WildGuardTest / SafetyBench / WildJailbreak.}}~\cite{wildguard2024, zhang2023safetybench, wildteaming2024} Safety and jailbreak robustness benchmarks; we report refusal/violation rates using benchmark-provided evaluators or a fixed judge model (specified in Section~\ref{app:metrics_impl}).
\end{itemize}

\subsection{Baseline Implementations}
\label{app:baselines}

\noindent\textbf{\textcolor{SFTFamColor}{Standard SFT}.}
We minimize token-level NLL on reference completions using the same optimizer, schedule, and training budget as TMS.

\noindent\textbf{\textcolor{SFTFamColor}{Self-SFT / Final-SFT}.}
We generate one completion per prompt with a fixed decoding policy (temperature/top-$p$; Section~\ref{app:decoding}), then fine-tune on the resulting synthetic dataset using standard NLL. Self-SFT uses outputs from $\pi_{\theta_0}$; Final-SFT uses outputs from the converged SFT model.

\noindent\textbf{\textcolor{RLFamColor}{GRPO / REINFORCE}.}
For verifiable tasks (math and Countdown), we use outcome rewards $r(x,y)\in\{0,1\}$ based on final-answer correctness (exact match or symbolic equivalence). For others, we use LLama-3.3-70B to give a numerical score. GRPO uses group-normalized advantages without a value network; REINFORCE uses a moving-average baseline. All RL runs use the same prompt distribution as SFT/TMS and are matched for total training budget.

\subsection{Decoding and Sampling Policies}
\label{app:decoding}

\noindent\textbf{Trajectory harvesting (TMS).}
For each checkpoint $t$ and input $x$, we generate $\hat{y}^{(t)}(x)\sim \pi_{\theta_t}(\cdot|x)$ using a fixed sampling policy (temperature/top-$p$), max token limit, and stop rules. Unless otherwise stated, we store one sampled output per $(x,t)$.

\noindent\textbf{Pass@K and diversity proxies.}
To compute Pass@K, majority-vote self-consistency accuracy (SC-Acc), and answer entropy, we draw $K$ independent samples per prompt under the same decoding policy for all methods. We report $K$ and decoding hyperparameters alongside results to ensure comparability. This metric is used in Section~\ref{sec:dissection}.

\subsection{Training Hyperparameters and Compute}
\label{app:compute}

We use AdamW with a cosine learning rate schedule (3\% warmup) and BF16 precision. We train all methods for 2 epochs. Table~\ref{tab:hparams} lists hyperparameters for full fine-tuning.

\begin{table}[t]
\centering
\caption{\textbf{Hyperparameters for full fine-tuning.}}
\label{tab:hparams}
\footnotesize
\setlength{\tabcolsep}{3pt}
\renewcommand{\arraystretch}{1.05}
\begin{tabular}{lcccc}
\toprule
\textbf{Model} & \textbf{LR} & \textbf{BS/dev} & \textbf{Grad Acc.} & \textbf{Eff. BS} \\
\midrule
LLaMA-3.2-1B   & $1\!\times\!10^{-4}$ & 4 & 8  & 256 \\
Qwen-2.5-1.5B  & $1\!\times\!10^{-4}$ & 4 & 8  & 256 \\
Qwen-2.5-3B    & $2\!\times\!10^{-5}$ & 2 & 16 & 256 \\
Qwen-2.5-7B    & $5\!\times\!10^{-6}$ & 1 & 32 & 256 \\
LLaMA-3.1-8B   & $5\!\times\!10^{-6}$ & 1 & 32 & 256 \\
\bottomrule
\end{tabular}
\end{table}

\noindent\textbf{Compute environment.}
All experiments run on 8$\times$ NVIDIA A100 (80GB) GPUs. We use \texttt{accelerate} for distributed training and \texttt{vLLM} for high-throughput inference and trajectory harvesting.

\begin{table}[t]
\centering
\caption{\textbf{Inference tensor parallelism (TP) settings.}}
\label{tab:tp}
\footnotesize
\setlength{\tabcolsep}{6pt}
\renewcommand{\arraystretch}{1.05}
\begin{tabular}{lc}
\toprule
\textbf{Model} & \textbf{TP} \\
\midrule
LLaMA-3.2-1B      & 8 \\
Qwen-2.5-1.5B     & 4 \\
Qwen-2.5-3B       & 8 \\
Qwen-2.5-7B       & 4 \\
LLaMA-3.1-8B      & 8 \\
\bottomrule
\end{tabular}
\end{table}

\subsection{Metric Implementation Details}
\label{app:metrics_impl}

\noindent\textbf{Answer extraction and verification (math).}
For GSM8K/MATH/Math-500/Countdown, we extract a final answer using a deterministic parser (e.g., \texttt{\textbackslash boxed\{\}} if present; otherwise the last numeric expression) and verify correctness via exact match or symbolic equivalence using \texttt{math-verify}\footnote{Math-verify: https://github.com/huggingface/Math-Verify}. For Countdown, we additionally verify that the expression uses only permitted numbers/operators.

\noindent\textbf{IFEval.}
We report strict prompt-level constraint satisfaction using the official IFEval checker.

\noindent\textbf{Helpfulness and safety.}
For WildGuardTest/WildJailbreak we report refusal/violation rates using benchmark-provided evaluators. For SafetyBench, we calculate the Accuracy.

\noindent\textbf{KL-to-base and PLD proxies.}
KL-to-base is computed as an expectation over evaluation prompts of the KL between $\pi_{\theta}(\cdot|x)$ and $\pi_{\theta_0}(\cdot|x)$ using token log-probabilities. Held-out PLD is computed as validation NLL on a held-out split of the training task, as described in Section~\ref{sec:dissection}.

\section{Additional Experiments}
\label{sec:additional}

We present targeted ablations to probe \emph{which aspects of trajectory mixing matter} and whether TMS extends beyond standard single-turn text benchmarks.

\begin{table}[t]
\centering
\caption{\textbf{RQ8 (Alignment \& Safety Retention).} Held-out safety evaluation. We report \textbf{SafetyBench} ($\uparrow$) and attack success rates (ASR; $\downarrow$) on \textbf{WildGuardTest} and \textbf{WildJailBreak}. Best/second-best are computed \emph{per model} for each metric.}
\label{tab:rq8_safety}
\vspace{-1mm}
\footnotesize
\setlength{\tabcolsep}{3pt}
\renewcommand{\arraystretch}{1.08}
\resizebox{0.7\columnwidth}{!}{%
\begin{tabular}{l l c c c}
\toprule
\textbf{Model} & \textbf{Method} &
\textcolor{EvalColor}{\textbf{SafetyBench}$\uparrow$} &
\textcolor{EvalColor}{\textbf{WildGuard ASR}$\downarrow$} &
\textcolor{EvalColor}{\textbf{WildJailBreak ASR}$\downarrow$} \\
\midrule
LLaMA-3.2-1B & \mSFT  & 21.8 & 38.6 & 52.4 \\
             & \mGRPO  & \best{25.0} & \best{32.0} & \best{41.4} \\
             & \mOurs  & \secondbest{24.6} & \secondbest{32.3} & \secondbest{42.2} \\
\midrule
Qwen-2.5-3B & \mSFT  & 31.1 & 61.3 & 89.4 \\
            & \mGRPO  & \best{35.4} & \secondbest{54.8} & \secondbest{83.7} \\
            & \mOurs  & \secondbest{35.0} & \best{54.2} & \best{82.9} \\
\midrule
Qwen-2.5-7B & \mSFT  & 34.1 & 43.8 & 74.2 \\
            & \mGRPO  & \best{38.6} & \best{36.6} & \best{65.2} \\
            & \mOurs  & \secondbest{37.1} & \secondbest{37.1} & \secondbest{65.9} \\
\bottomrule
\end{tabular}%
}
\vspace{-2mm}
\end{table}

\label{sec:safety}
\begin{rqbox}
\noindent\textbf{RQ8 (Alignment and safety retention).} \textit{Does TMS better preserve helpfulness and safety behavior than SFT on held-out alignment/safety benchmarks?}
\end{rqbox}

\noindent\textbf{Setup (held-out safety retention).}
To isolate safety \emph{retention} rather than safety \emph{learning}, we evaluate all post-trained models on safety benchmarks that are \textbf{never used for post-training}. We report (i) \textbf{SafetyBench} ($\uparrow$), a safety QA benchmark scored by accuracy-style criteria, and (ii) jailbreak \textbf{attack success rates} (ASR; $\downarrow$) on \textbf{WildGuardTest} and \textbf{WildJailBreak}. Lower ASR indicates stronger refusal/robustness to adversarial prompts. We compare \mSFT, \mGRPO, and \textbf{TMS} under matched training budgets; best/second-best are computed per model and metric (Table~\ref{tab:rq8_safety}).

\smallskip
\noindent\textbf{Safety retention: SFT erodes safety, while TMS recovers most of the loss.}
Across all three models, standard SFT yields noticeably weaker safety behavior, increasing ASR and reducing SafetyBench relative to the stronger baselines. In contrast, TMS consistently reduces ASR and improves SafetyBench compared to SFT, closely tracking the on-policy RL baseline. For example, on \textbf{Qwen-2.5-3B}, TMS slightly improves over GRPO on both ASR metrics (WildGuard: 54.2 vs.\ 54.8; WildJailBreak: 82.9 vs.\ 83.7) while remaining within a small margin on SafetyBench (35.0 vs.\ 35.4). On \textbf{Qwen-2.5-7B}, TMS again stays close to GRPO (WildJailBreak ASR: 65.9 vs.\ 65.2), and on \textbf{LLaMA-3.2-1B} it recovers a substantial fraction of SFT-induced degradation (e.g., WildJailBreak ASR: 42.2 vs.\ 52.4 for SFT).

\smallskip
\noindent\textbf{Interpretation.}
These results support the view that safety regressions under continued post-training are a form of \emph{capability drift}: static-label SFT can over-specialize and inadvertently erode previously acquired guardrails. By keeping supervision closer to the model's evolving support, TMS reduces destructive updates and thereby preserves safety behavior that is otherwise fragile under standard SFT.

\label{sec:window}
\begin{rqbox}
\noindent\textbf{RQ9 (Window vs uniform).} \textit{Which parts of the training trajectory matter most: does concentrating TMS on an intermediate ``window'' outperform uniform sampling for retention at matched target accuracy?}
\end{rqbox}

\noindent\textbf{Setup (sampling distributions over checkpoints).}
We instantiate TMS with $T{=}10$ equally-spaced checkpoints along the baseline post-training trajectory (SFT on the same task), saved at fixed step intervals from the start to the end of training.\footnote{All variants use the same total training budget and the same trajectory buffer; only the checkpoint sampling distribution changes.}
For each input $x$, we store one sampled trajectory output $\hat{y}^{(t)}(x)$ per checkpoint $t\in\{1,\dots,T\}$.
During student training, each synthetic target is selected by first sampling a checkpoint index $t\sim p(t)$, then setting $\tilde{y}\leftarrow \hat{y}^{(t)}(x)$ (with the same decoding settings across variants).
We compare the following distributions $p(t)$:

\begin{itemize}[leftmargin=*,itemsep=2pt]
    \item \textbf{Early-heavy:} $p(t)=\frac{1}{\lfloor T/3\rfloor}\,\mathbbm{1}\{t\le \lfloor T/3\rfloor\}$ (uniform over the first $\approx 30\%$ of checkpoints).
    \item \textbf{Late-heavy:} $p(t)=\frac{1}{T-\lceil 2T/3\rceil+1}\,\mathbbm{1}\{t\ge \lceil 2T/3\rceil\}$ (uniform over the last $\approx 30\%$ of checkpoints).
    \item \textbf{Uniform:} $p(t)=\frac{1}{T}$ (uniform over all checkpoints).
\end{itemize}

\noindent
These choices directly test the ``window'' hypothesis suggested by mismatch drift: early checkpoints emphasize diversity but are noisier, while late checkpoints emphasize correctness but may reflect over-committed solution modes.

\smallskip
\noindent\textbf{Windowing matters: early checkpoints are too noisy, late checkpoints can over-specialize.}
Table~\ref{tab:rq9_sampling} reports results on Qwen-2.5-3B. Sampling from \emph{early} checkpoints substantially degrades target performance (64.7 vs.\ 70.8 for uniform), consistent with injecting supervision before the model has learned task format and basic competence. This variant also increases cross-task drop (4.5), suggesting that overly noisy supervision can act as an unstable training signal rather than a regularizer. In contrast, \emph{late}-heavy sampling yields the highest target average (71.9) and slightly improves cross-task drop (2.0). However, in retention-focused runs (measured on the held-out suite), late-heavy mixtures can re-introduce the same brittleness we observe in late-stage SFT: the supervision concentrates around a narrow set of converged trajectories, weakening the mode-preservation benefit and partially reviving mismatch drift (Section~\ref{sec:dissection}). Uniform sampling provides a robust default (70.8 target avg, 2.3 cross-task) that balances competence (late checkpoints) and diversity (early/mid checkpoints).

\begin{table}[t]
\centering
\caption{\textbf{RQ9: Sampling distribution for TMS} (Qwen-2.5-3B, $T{=}10$).}
\label{tab:rq9_sampling}
\vspace{-1mm}
\footnotesize
\setlength{\tabcolsep}{3pt}
\renewcommand{\arraystretch}{1.08}
\resizebox{0.3\columnwidth}{!}{%
\begin{tabular}{lcc}
\toprule
\textbf{Sampling} &
\textcolor{TrainColor}{\textbf{Avg Target} $\uparrow$} &
\textcolor{TransferColor}{\textbf{Cross-task} $\downarrow$} \\
\midrule
Early   & 64.7 & 4.5 \\
Uniform & 70.8 & 2.3 \\
Late    & 71.9 & 2.0 \\
\bottomrule
\end{tabular}%
}
\vspace{-2mm}
\end{table}

\label{sec:t}
\begin{rqbox}
\noindent\textbf{RQ10 (Effect of $T$ on supervision quality).} \textit{Does increasing the number of checkpoints $T$ improve TMS by capturing more diverse solution modes, or hurt by injecting early noisy supervision, and where is the optimal $T$?}
\end{rqbox}

\noindent\textbf{Setup (sweeping the number of checkpoints).}
We sweep the number of trajectory checkpoints $T$ while holding the \emph{total post-training compute} fixed. Concretely, for each $T\in\{2,4,6,8,10,12\}$, we (i) save $T$ equally-spaced checkpoints along the same underlying post-training trajectory (Qwen-2.5-3B on the same target task), (ii) harvest one sampled completion per training example per checkpoint using the same decoding policy, and (iii) train a TMS student under identical optimizer settings and total update steps. Unless otherwise stated, we use \textbf{uniform checkpoint sampling} $p(t)=1/T$ and the same mixture procedure as in Section~\ref{sec:method}. The reported metrics are \textcolor{TrainColor}{Avg Target} (mean across the six target tasks) and \textcolor{TransferColor}{Cross-task} (Avg $\Delta$ on the other target tasks), matching the main-table definitions.

\smallskip
\noindent\textbf{Moderate $T$ is best: diminishing returns past the mid-range.}
Figure~\ref{fig:t_ablation} summarizes the sweep. Increasing $T$ from small values initially improves Avg Target and reduces cross-task interference, consistent with a richer supervision mixture that (i) better covers alternative valid modes (reducing mode collapse) and (ii) keeps supervision closer to the model's evolving support (reducing mismatch-driven updates). However, beyond a moderate range, gains saturate and variability increases: adding more checkpoints begins to incorporate (a) very early snapshots whose outputs are noisy and format-inconsistent, and (b) very late snapshots that may be over-specialized. Both effects reduce the marginal benefit of additional checkpoints.

In our setting, $T\in[8,10]$ provides a stable sweet spot for the accuracy--transfer tradeoff. This range captures enough of the trajectory to preserve diversity while avoiding excessive exposure to the lowest-quality early supervision. We therefore use $T{=}10$ as the default throughout the paper unless otherwise noted.

\begin{figure}[t]
  \centering
  \includegraphics[width=0.6\linewidth]{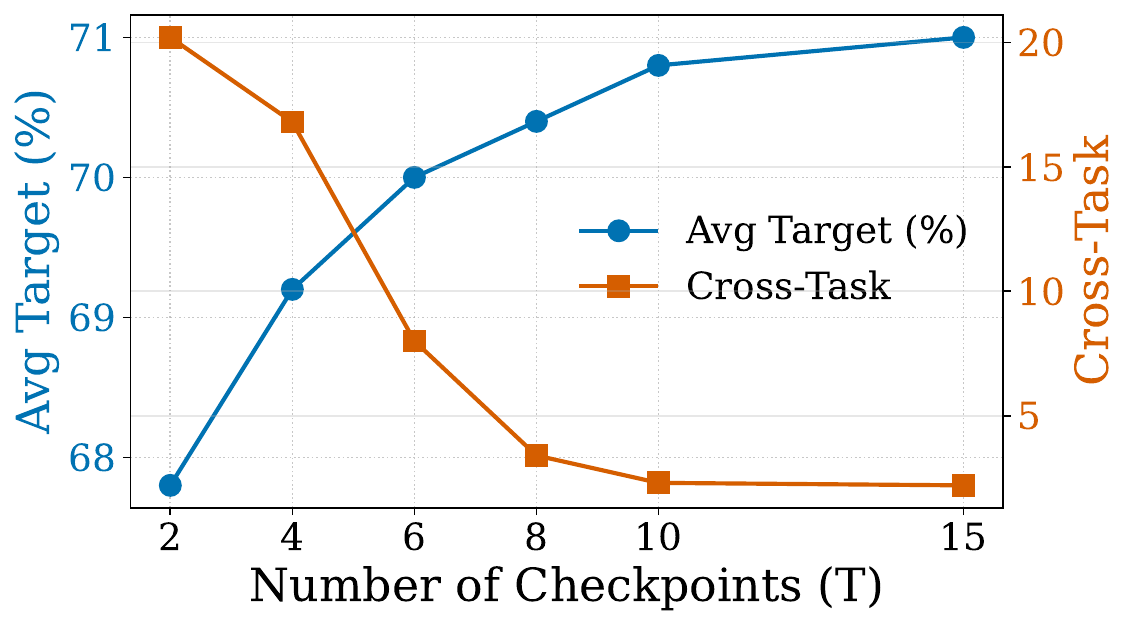}
  \caption{\textbf{RQ10: Effect of the number of checkpoints $T$} (Qwen-2.5-3B). We report \textcolor{TrainColor}{Avg Target} and \textcolor{TransferColor}{Cross-task}.}
  \label{fig:t_ablation}
  \vspace{-2mm}
\end{figure}

\label{sec:mul-agent}
\begin{rqbox}
\noindent\textbf{RQ11 (Multimodal feasibility).} \textit{Does trajectory-mixed supervision extend to a multimodal setting, improving retention and reducing drift relative to standard SFT in a single-model pilot?}
\end{rqbox}

\begin{rqbox}
\noindent\textbf{RQ12 (Agentic feasibility).} \textit{Does TMS extend to a small multi-step tool-use setting, preserving the same KL/PLD--forgetting trends observed in single-turn tasks?}
\end{rqbox}

\noindent\textbf{Setup.}
To test feasibility beyond single-turn text benchmarks, we run two small pilots under the same evaluation lens used throughout the paper: target performance on the pilot task(s) and cross-task interference to the remaining downstream targets.
\textbf{Agentic pilot:} we fine-tune \textbf{Qwen-2.5-7B} on \textbf{ToolAlpaca}, a tool-use dataset requiring multi-step tool-augmented responses.
\textbf{Multimodal pilot:} we fine-tune \textbf{Qwen-2.5-Omni-3B} on \textbf{TextVQA} and \textbf{DocVQA}, which require answering questions about images containing text and scanned documents, respectively.
For SFT and TMS, we use the same supervised objective as in Section~\ref{sec:method}; for GRPO, we use outcome-based rewards when verifiable signals exist and otherwise rely on the benchmark evaluation protocol. Cross-task interference is reported as the average $\Delta$ on the remaining target tasks (defined in Section~\ref{sec:experiments}), keeping the measurement consistent with the main results.

\begin{table}[t]
\centering
\caption{\textbf{RQ11--12: Agentic and multimodal pilots} (ToolAlpaca on Qwen-2.5-7B; TextVQA/DocVQA on Qwen-2.5-Omni-3B).}
\label{tab:agentic_multimodal}
\vspace{-1mm}
\scriptsize
\setlength{\tabcolsep}{2.2pt}
\renewcommand{\arraystretch}{1.05}

\resizebox{0.5\columnwidth}{!}{%
\begin{tabular}{l c | c c | c}
\toprule
\textbf{Method} &
\multicolumn{1}{c|}{\textcolor{TrainColor}{\textbf{ToolAlpaca}}} &
\multicolumn{2}{c|}{\textcolor{TrainColor}{\textbf{Multimodal}}} &
\multicolumn{1}{c}{\textcolor{TransferColor}{\textbf{Cross-task}$\downarrow$}} \\
\cmidrule(lr){2-2}\cmidrule(lr){3-4}\cmidrule(lr){5-5}
&  &
\textcolor{TrainColor}{\textbf{TextVQA}} &
\textcolor{TrainColor}{\textbf{DocVQA}} &
 \\
\midrule
\rowcolor{ZeroShade}\mZero & 48.6 & 79.8 & 93.3 & -- \\
\rowcolor{SFTShade}\mSFT   & 66.2 & 85.1 & 96.2 & 29.4 \\
\rowcolor{RLShade}\mGRPO   & \best{68.4} & 85.0 & \best{96.3} & \best{1.8} \\
\rowcolor{green!8}\mOurs   & 67.9 & \best{85.2} & 96.2 & 2.6 \\
\bottomrule
\end{tabular}%
}
\vspace{-2mm}
\end{table}

\smallskip
\noindent\textbf{TMS extends to multimodal and tool-use pilots while remaining stable.}
Table~\ref{tab:agentic_multimodal} summarizes the pilot results. Standard SFT substantially improves the agentic and multimodal targets relative to zero-shot but incurs large cross-task interference (29.4), suggesting brittle specialization. In contrast, GRPO achieves minimal interference (1.8), consistent with the general trend that on-policy optimization tends to stay in a low-drift regime. TMS closely tracks this RL-like behavior: it achieves low cross-task interference (2.6) while remaining competitive on both tool-use and multimodal targets.

On the agentic benchmark, TMS nearly matches GRPO (67.9 vs.\ 68.4 on ToolAlpaca). On multimodal tasks, TMS matches or slightly improves over SFT (TextVQA: 85.2 vs.\ 85.1) while remaining comparable to GRPO. Taken together, these pilots suggest that trajectory-mixed supervision is not confined to single-turn text tasks; the same ``near-policy via trajectory'' principle transfers to multi-step tool-use and multimodal post-training, and continues to reduce cross-task interference in the same direction observed in Table~\ref{tab:main_results}.

\newpage
\section{Full Experimental Results} \label{app:full}
\begin{table*}[h]
\centering
\caption{\textbf{Full results across models.} Target-task performance $S_{\text{tgt}}$ (higher is better), cross-task transfer to other target tasks $S_{\text{xfer}}$ (Avg $\Delta$, lower is better), and held-out retention on ARC-C/HotpotQA/SafetyBench. Parentheses denote change vs. the base model; \best{best}/\secondbest{second-best} are highlighted.}
\label{tab:main_results_full}
\footnotesize
\setlength{\tabcolsep}{3pt}
\renewcommand{\arraystretch}{1.15}

\resizebox{0.90\textwidth}{!}{%
\begin{tabular}{L{2.6cm}
C{1.05cm} C{1.05cm} C{1.05cm} C{1.05cm} C{1.05cm} C{1.05cm}
C{1.5cm}
L{2.0cm} L{2.0cm} L{2.0cm}}
\toprule
\multirow{2}{*}{\textbf{Method}} &
\multicolumn{6}{c}{\textcolor{TrainColor}{\textbf{Target Tasks }$S_{\text{tgt}}\uparrow$}} &
\multicolumn{1}{c}{\textcolor{TransferColor}{\textbf{Cross-task }$S_{\text{xfer}}\downarrow$}} &
\multicolumn{3}{c}{\textcolor{EvalColor}{\textbf{Held-out Retention }\,$\uparrow$}} \\
\cmidrule(lr){2-7}\cmidrule(lr){8-8}\cmidrule(lr){9-11}
&
\textcolor{TrainColor}{\textbf{Math500}} &
\textcolor{TrainColor}{\textbf{MATH}} &
\textcolor{TrainColor}{\textbf{GSM8K}} &
\textcolor{TrainColor}{\textbf{Count.}} &
\textcolor{TrainColor}{\textbf{IFEval}} &
\textcolor{TrainColor}{\textbf{MMLU}} &
\textcolor{TransferColor}{\makecell{\textbf{Avg $\Delta$}\\\textbf{other tgt}}} &
\textcolor{EvalColor}{\textbf{ARC-C}} &
\textcolor{EvalColor}{\textbf{Hotpot}} &
\textcolor{EvalColor}{\textbf{SafetyBench}} \\
\midrule

\modelhdr{Qwen-2.5-1.5B-Instruct}
\rowcolor{ZeroShade}
\mZero & 46.6 & 48.1 & 68.9 & 13.2 & 38.3 & 30.8 & -- & 66.9 & 43.1 & 35.1 \\
\rowcolor{SFTShade}
\mSFT  & 58.4 & 62.4 & \secondbest{77.3} & 25.2 & \secondbest{54.1} & \best{39.0} & \negd{26.2} &
42.3\negdelta{24.6} & 19.4\negdelta{23.7} & 31.4\negdelta{3.7} \\
\rowcolor{SFTShade}
\mSelf & 54.9 & 57.1 & 73.1 & 29.4 & 47.3 & 34.7 & \negd{18.4} &
51.4\negdelta{15.5} & 27.5\negdelta{15.6} & 34.3\negdelta{0.8} \\
\rowcolor{SFTShade}
\mFinal& 57.1 & 61.9 & \best{78.0} & 25.1 & 53.1 & 36.8 & \negd{24.3} &
48.3\negdelta{18.6} & 25.3\negdelta{17.8} & 34.0\negdelta{1.1} \\
\rowcolor{RLShade}
\mReinforce & 56.9 & 60.1 & 75.2 & 26.1 & 49.3 & 34.7 & \negd{3.4} &
63.8\negdelta{3.1} & 40.1\negdelta{3.0} & \secondbest{35.0\negdelta{0.1}} \\
\rowcolor{RLShade}
\mGRPO      & \secondbest{58.9} & \best{63.5} & 76.3 & \best{35.2} & 53.6 & 35.2 & \best{\negd{1.2}} &
\best{66.7\negdelta{0.2}} & \best{42.0\negdelta{1.1}} & \best{36.3\posdelta{1.3}} \\
\rowcolor{green!8}
\mOurs & \best{59.0} & \secondbest{62.8} & 76.8 & \secondbest{32.4} & \best{54.4} & \secondbest{38.6} & \secondbest{\negd{2.9}} &
\secondbest{65.4\negdelta{1.4}} & \secondbest{41.4\negdelta{1.7}} & 34.8\negdelta{0.3} \\
\midrule

\modelhdr{Qwen-2.5-3B-Instruct}
\rowcolor{ZeroShade}
\mZero & 60.2 & 62.0 & 85.2 & 29.6 & 60.6 & 43.8 & -- & 80.5 & 53.6 & 35.7 \\
\rowcolor{SFTShade}
\mSFT  & 76.4 & 80.3 & \best{90.2} & 49.2 & \secondbest{71.3} & \best{54.0} & \negd{39.2} &
42.7\negdelta{37.8} & 24.1\negdelta{29.5} & 31.1\negdelta{4.6} \\
\rowcolor{SFTShade}
\mSelf & 74.2 & 77.4 & 87.3 & 48.6 & 67.9 & 47.2 & \negd{29.3} &
51.8\negdelta{28.7} & 33.1\negdelta{20.5} & 33.6\negdelta{2.1} \\
\rowcolor{SFTShade}
\mFinal& 75.8 & 79.6 & 89.5 & 48.1 & 70.2 & 51.4 & \negd{35.1} &
45.2\negdelta{35.3} & 28.4\negdelta{25.2} & 33.2\negdelta{2.5} \\
\rowcolor{RLShade}
\mReinforce & 75.1 & 78.2 & 88.4 & 47.3 & 67.2 & 49.1 & \negd{4.1} &
78.6\negdelta{1.9} & 51.4\negdelta{2.2} & \best{35.5\negdelta{0.2}} \\
\rowcolor{RLShade}
\mGRPO      & \secondbest{77.4} & \best{81.5} & \secondbest{90.1} & \best{54.2} & 70.9 & 50.8 & \best{\negd{1.9}} &
\best{80.2\negdelta{0.3}} & \best{53.1\negdelta{0.5}} & \secondbest{35.4\negdelta{0.3}} \\
\rowcolor{green!8}
\mOurs & \best{77.8} & \secondbest{81.1} & 88.9 & \secondbest{51.3} & \best{72.0} & \secondbest{53.6} & \secondbest{\negd{2.3}} &
\secondbest{79.2\negdelta{1.3}} & \secondbest{51.7\negdelta{1.9}} & 35.0\negdelta{0.7} \\
\midrule

\modelhdr{Qwen-2.5-7B-Instruct}
\rowcolor{ZeroShade}
\mZero & 72.2 & 70.5 & 89.9 & 41.0 & 69.5 & 55.8 & -- & 88.8 & 63.9 & 38.3 \\
\rowcolor{SFTShade}
\mSFT  & 82.6 & 83.4 & \best{94.8} & 59.4 & \best{79.7} & \best{64.2} & \negd{41.2} &
53.5\negdelta{35.3} & 36.2\negdelta{27.7} & 34.1\negdelta{4.2} \\
\rowcolor{SFTShade}
\mSelf & 80.4 & 81.1 & 91.6 & 56.0 & 72.9 & 59.4 & \negd{33.8} &
68.4\negdelta{20.4} & 47.1\negdelta{16.8} & 36.9\negdelta{1.4} \\
\rowcolor{SFTShade}
\mFinal& 81.8 & 82.0 & 92.8 & 58.4 & 75.0 & \secondbest{63.4} & \negd{37.0} &
59.8\negdelta{29.0} & 43.6\negdelta{20.3} & 36.4\negdelta{1.9} \\
\rowcolor{RLShade}
\mReinforce & 81.1 & 80.4 & 92.1 & 60.1 & 74.1 & 59.0 & \negd{4.2} &
86.8\negdelta{2.0} & 62.1\negdelta{1.8} & \secondbest{38.0\negdelta{0.3}} \\
\rowcolor{RLShade}
\mGRPO      & \secondbest{83.4} & \best{85.8} & 93.6 & \best{69.0} & 77.5 & 60.8 & \best{\negd{1.6}} &
\best{88.4\negdelta{0.4}} & \best{63.4\negdelta{0.5}} & \best{38.6\posdelta{0.3}} \\
\rowcolor{green!8}
\mOurs & \best{83.6} & \secondbest{83.9} & \secondbest{93.7} & \secondbest{62.8} & \secondbest{78.6} & 62.9 & \secondbest{\negd{2.8}} &
\secondbest{86.9\negdelta{1.9}} & \secondbest{62.8\negdelta{1.1}} & 37.1\negdelta{1.2} \\
\midrule

\modelhdr{LLaMA-3.1-8B-Instruct}
\rowcolor{ZeroShade}
\mZero & 47.2 & 46.0 & 84.1 & 18.6 & 67.0 & 46.2 & -- & 84.1 & 52.6 & 33.8 \\
\rowcolor{SFTShade}
\mSFT  & \secondbest{66.7} & 64.6 & \best{89.6} & 39.2 & 75.2 & \best{55.1} & \negd{32.9} &
53.2\negdelta{30.9} & 28.2\negdelta{24.4} & 30.1\negdelta{3.7} \\
\rowcolor{SFTShade}
\mSelf & 60.4 & 60.4 & 87.4 & 36.4 & 71.6 & 50.8 & \negd{24.9} &
64.2\negdelta{19.9} & 39.1\negdelta{13.5} & 32.6\negdelta{1.2} \\
\rowcolor{SFTShade}
\mFinal& 64.3 & 63.1 & 88.8 & 37.4 & 73.8 & 53.2 & \negd{31.4} &
60.8\negdelta{23.3} & 36.2\negdelta{16.4} & 32.2\negdelta{1.6} \\
\rowcolor{RLShade}
\mReinforce & 63.9 & 61.8 & 88.1 & 34.8 & 72.2 & 49.3 & \negd{4.9} &
82.1\negdelta{2.0} & 50.6\negdelta{2.0} & \secondbest{33.6\negdelta{0.2}} \\
\rowcolor{RLShade}
\mGRPO      & \best{70.5} & \best{65.4} & \secondbest{89.4} & \best{41.2} & \secondbest{75.6} & 50.8 & \best{\negd{0.9}} &
\best{83.8\negdelta{0.3}} & \best{52.1\negdelta{0.5}} & \best{34.1\posdelta{0.3}} \\
\rowcolor{green!8}
\mOurs & 66.3 & \secondbest{64.9} & 89.2 & \secondbest{40.4} & \best{76.1} & \secondbest{54.6} & \secondbest{\negd{2.3}} &
\secondbest{83.1\negdelta{1.0}} & \secondbest{51.4\negdelta{1.2}} & 33.0\negdelta{0.8} \\
\midrule

\modelhdr{LLaMA-3.2-1B-Instruct}
\rowcolor{ZeroShade}
\mZero & 20.4 & 22.9 & 36.3 & 3.6 & 45.1 & 16.7 & -- & 38.9 & 21.9 & 24.8 \\
\rowcolor{SFTShade}
\mSFT  & 35.2 & 38.8 & \secondbest{53.1} & 12.8 & \secondbest{52.8} & \best{25.2} & \negd{24.1} &
23.4\negdelta{15.5} & 10.8\negdelta{11.1} & 21.8\negdelta{3.0} \\
\rowcolor{SFTShade}
\mSelf & 30.6 & 32.0 & 48.9 & 12.0 & 49.6 & 21.8 & \negd{17.4} &
29.2\negdelta{9.7} & 14.8\negdelta{7.1} & 23.6\negdelta{1.2} \\
\rowcolor{SFTShade}
\mFinal& 33.9 & 37.9 & 51.0 & 12.2 & 51.4 & 23.8 & \negd{24.6} &
26.8\negdelta{12.1} & 13.1\negdelta{8.8} & 23.1\negdelta{1.7} \\
\rowcolor{RLShade}
\mReinforce & 32.4 & 35.6 & 51.4 & 11.9 & 50.1 & 22.1 & \negd{3.8} &
36.8\negdelta{2.1} & 20.1\negdelta{1.8} & 24.4\negdelta{0.4} \\
\rowcolor{RLShade}
\mGRPO      & \best{38.5} & \best{39.5} & \best{54.3} & \best{15.8} & 42.1 & \secondbest{24.1} & \best{\negd{1.6}} &
\best{38.2\negdelta{0.7}} & \secondbest{21.2\negdelta{0.7}} & \best{25.0\posdelta{0.2}} \\
\rowcolor{green!8}
\mOurs & \secondbest{35.7} & \secondbest{38.9} & 52.6 & \secondbest{13.5} & \best{53.7} & \best{25.2} & \secondbest{\negd{3.1}} &
\secondbest{37.6\negdelta{1.3}} & \best{21.8\negdelta{0.1}} & \secondbest{24.6\negdelta{0.2}} \\
\bottomrule

\end{tabular}%
}
\vspace{-2mm}
\end{table*}

\end{document}